\theoremstyle{plain}
\newcommand{\defeq}{\vcentcolon=}
\newcommand{\stkout}[1]{\ifmmode\textcolor{red}{\text{\sout{\ensuremath{#1}}}}\else\sout{#1}\fi}
\DeclareMathOperator*{\argmax}{argmax}
\tikzset{
    ->, 
    >=stealth,
}
\definecolor{bettergreen}{HTML}{4faa42}
\definecolor{mygreen}{HTML}{91ed92}
\definecolor{myred}{HTML}{f28d8d}
\definecolor{mygrey}{HTML}{a1a1a1}
\title{Learning to Incentivize Other Learning Agents}
\author{
    Jiachen Yang\textsuperscript{\normalfont 1}\thanks{Work done during internship at DeepMind},
    Ang Li\textsuperscript{\normalfont 2},
    Mehrdad Farajtabar\textsuperscript{\normalfont 2},
    Peter Sunehag\textsuperscript{\normalfont 2},
    Edward Hughes\textsuperscript{\normalfont 2},
    Hongyuan Zha\textsuperscript{\normalfont 1,3}\thanks{On leave from College of Computing, Georgia Institute of Technology} \\
    \textsuperscript{1}Georgia Institute of Technology \quad
    \textsuperscript{2}DeepMind\\
    \textsuperscript{3}AIRS and Chinese University of Hong Kong, Shenzhen\\
    \texttt{jiachen.yang@gatech.edu}\\
    \texttt{[anglili,farajtabar,sunehag,edwardhughes]@google.com}\\
    \texttt{zha@cc.gatech.edu}
}
\begin{document}

\maketitle

\begin{abstract}
The challenge of developing powerful and general Reinforcement Learning (RL) agents has received increasing attention in recent years.
Much of this effort has focused on the single-agent setting, in which an agent maximizes a predefined extrinsic reward function.
However, a long-term question inevitably arises: how will such independent agents cooperate when they are continually learning and acting in a shared multi-agent environment?
Observing that humans often provide incentives to influence others' behavior, we propose to equip each RL agent in a multi-agent environment with the ability to give rewards directly to other agents, using a learned incentive function.
Each agent learns its own incentive function by explicitly accounting for its impact on the learning of recipients and, through them, the impact on its own extrinsic objective.
We demonstrate in experiments that such agents significantly outperform standard RL and opponent-shaping agents in challenging general-sum Markov games, often by finding a near-optimal division of labor.
Our work points toward more opportunities and challenges along the path to ensure the common good in a multi-agent future.
\end{abstract}


\section{Introduction}
\label{sec:intro}
Reinforcement Learning (RL) \citep{sutton2018reinforcement} agents are achieving increasing success on an expanding set of tasks \citep{mnih2015human,jaderberg2019human,silver2017mastering,vinyals2019grandmaster,berner2019dota}.
While much effort is devoted to single-agent environments and fully-cooperative games, there is a possible future in which large numbers of RL agents with imperfectly-aligned objectives must interact and continually learn in a shared multi-agent environment. 
The option of centralized training with a global reward \citep{foerster2018counterfactual,sunehag2018value,rashid2018a} is excluded as it does not scale easily to large populations and may not be adopted by self-interested parties.
On the other hand, the paradigm of decentralized training---in which no agent is designed with an objective to maximize collective performance and each agent optimizes its own set of policy parameters---poses difficulties for agents to attain high individual and collective return \citep{olson1965logic}. 
In particular, agents in many real world situations with mixed motives, such as settings with nonexcludable and subtractive common-pool resources, may face a social dilemma wherein mutual selfish behavior leads to low individual and total utility, due to fear of being exploited or greed to exploit others \citep{rapoport1974prisoner,leibo2017multi,lerer2017maintaining}.
Whether, and how, independent learning and acting agents can cooperate while optimizing their own objectives is an open question.


The conundrum of attaining multi-agent cooperation with decentralized
training of agents, who may have misaligned individual objectives, requires us to go beyond the restrictive mindset that the collection of predefined individual rewards cannot be changed by the agents themselves.
We draw inspiration from the observation that this fundamental multi-agent problem arises at multiple scales of human activity and, crucially, that it can be successfully resolved when agents give the right incentives to \textit{alter} the objective of other agents, in such a way that the recipients' behavior changes for everyone's advantage. 
Indeed, a significant amount of individual, group, and international effort is expended on creating effective incentives or sanctions to shape the behavior of other individuals, social groups, and nations \citep{veroff2016social,delfgaauw2008incentives,doxey1980economic}. 
The rich body of work on game-theoretic side payments \citep{jackson2005endogenous,harstad2008side,fong2009optimal} further attests to the importance of inter-agent incentivization in society.

Translated to the framework of Markov games for multi-agent reinforcement learning (MARL) \citep{littman1994markov}, the key insight is to remove the constraints of an immutable reward function.
Instead, we allow agents to \textit{learn} an incentive function that gives rewards to other learning agents and thereby shape their behavior.
The new learning problem for an agent 
becomes two-fold: learn a policy that optimizes the total extrinsic rewards and incentives it receives, and learn an incentive function that alters other agents' behavior so as to optimize its own extrinsic objective.
While the emergence of incentives in nature may have an evolutionary explanation \citep{guth1995evolutionary}, human societies contain ubiquitous examples of learned incentivization and we focus on the learning viewpoint in this work.

{\bf The Escape Room game.}
We may illustrate the benefits and necessity of incentivization with a simple example. 
The \textit{Escape Room} game $\text{ER}(N,M)$ is a discrete $N$-player Markov game with individual extrinsic rewards and parameter $M < N$, as shown in \Cref{fig:symmetric-room}.
An agent gets +10 extrinsic reward for exiting a door and ending the game, but the door can only be opened when $M$ other agents cooperate to pull the lever.
However, an extrinsic penalty of $-1$ for any movement discourages all agents from taking the cooperative action.
If agents optimize their own rewards with standard independent RL, no agent can attain positive reward, as we show in \Cref{sec:results}.

This game may be solved by equipping agents with the ability to incentivize other agents to pull the lever.
However, we hypothesize---and confirm in experiments---that 
merely augmenting an agent's action space with a ``give-reward'' action and applying standard RL faces significant learning difficulties.
Consider the case of $\text{ER}(2,1)$: suppose we allow agent A1 an additional action that sends +2 reward to agent A2, and let it observe A2's chosen action prior to taking its own action.
Assuming that A2 conducts sufficient exploration, an intelligent reward-giver should learn to use the give-reward action to incentivize A2 to pull the lever.
However, RL optimizes the expected cumulative reward within \textit{one} episode,
but the effect of a give-reward action manifests in the recipient's behavior only after many learning updates that generally span \textit{multiple} episodes.
Hence, a reward-giver may not receive any feedback within an episode, much less an immediate feedback, on whether the give-reward action benefited its own extrinsic objective.
Instead, we need an agent that explicitly accounts for the impact of incentives on the recipient's learning and, thereby, on its own future performance.


As a first step toward addressing these new challenges, we make the following conceptual, algorithmic, and experimental contributions.
(1) We create an agent that learns an incentive function to reward other learning agents, by explicitly accounting for the impact of incentives on its own performance, through the learning of recipients.
(2) Working with agents who conduct policy optimization, we derive the gradient of an agent's extrinsic objective with respect to the parameters of its incentive function.
We propose an effective training procedure based on online cross-validation to update the incentive function and policy on the same time scale.
(3) We show convergence to mutual cooperation in a matrix game, and experiment on a new deceptively simple \textit{Escape Room} game, which poses significant difficulties for standard RL and action-based opponent-shaping agents, but on which our agent consistently attains the global optimum.
(4) Finally, our agents discover near-optimal division of labor in the challenging and high-dimensional social dilemma problem of \textit{Cleanup} \citep{hughes2018inequity}.
Taken together, we believe this is a promising step toward a cooperative multi-agent future.

\begin{figure}[t]
    \centering
    \begin{tikzpicture}
        \node[state, fill=mygrey] at (0,0) (s1) {start};
        \node[state, left of=s1, fill=myred] at (-1.5,0) (s0) {lever};
        \node[state, right of=s1, fill=mygreen] at (1.5,0) (s2) {door};
        \draw 
            (s0) edge[bend left, below] node{-1} (s1)
            (s1) edge[bend left, above] node{-1} (s0)
            (s1) edge[bend left, below] node{-1 or 10} (s2)
            (s2) edge[bend left, above] node{-1} (s1)
            (s0) edge[bend left, above] node{-1 or 10} (s2)
            (s2) edge[bend left, above] node{-1} (s0)
        ;
    \end{tikzpicture}
    \caption{The $N$-player \textit{Escape Room} game $\text{ER}(N,M)$. For $M < N$, if fewer than $M$ agents pull the lever, which incurs a cost of $-1$, then all agents receive $-1$ for changing positions. Otherwise, the agent(s) who is not pulling the lever can get $+10$ at the door and end the episode.}
    \label{fig:symmetric-room}
\end{figure}
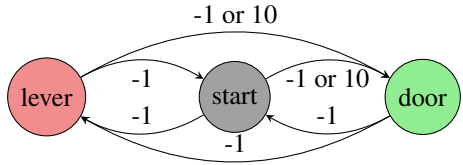

\vspace{-2mm}
\section{Related work}
\label{sec:related-work}
\vspace{-1mm}
Learning to incentivize other learning agents is motivated by the problem of cooperation among independent learning agents in intertemporal social dilemmas (ISDs) \citep{leibo2017multi}, in which defection is preferable to individuals in the short term but mutual defection leads to low collective performance in the long term.
Algorithms for fully-cooperative MARL \citep{foerster2018counterfactual,rashid2018a,sunehag2018value} may not be applied as ISDs have mixed motives and cannot canonically be reduced to fully cooperative problems.
Previous work showed that collective performance can be improved by independent agents with \textit{intrinsic} rewards  \citep{eccles2019learning,hughes2018inequity,wang2019evolving,jaques2019social,hostallero2020inducing}, which
are either hand-crafted or slowly evolved based on other agents' performance and modulate each agent's own total reward.
In contrast, a reward-giver's incentive function in our work is \textit{learned} on the same timescale as policy learning and is given to, and maximized by, \textit{other} agents.
Empirical research shows that augmenting an agent's action space with a ``give-reward'' \textit{action} can improve cooperation during certain training phases in ISDs \citep{lupu2020gifting}.

Learning to incentivize is a form of opponent shaping, whereby an agent learns to influence the learning update of other agents for its own benefit. While LOLA \citep{foerster2018learning} and SOS \citep{letcher2019stable} exert influence via actions taken by its policy, whose effects manifest through the Markov game state transition, our proposed agent exerts direct influence via an incentive function, which is distinct from its policy and which explicitly affects the recipient agent's learning update. Hence the need to influence other agents does not restrict a reward-giver's policy, potentially allowing for more flexible and stable shaping. We describe the mathematical differences between our method and LOLA in Section \ref{subsec:diff-from-lola}, and experimentally compare with LOLA agents augmented with reward-giving actions.


Our work is related to a growing collection of work on modifying or learning a reward function that is in turn maximized by another learning algorithm \citep{baumann2018adaptive,sodomka2013coco,zheng2018learning}.
Previous work investigate the evolution of the prisoner's dilemma payoff matrix when altered by a ``mutant'' player who gives a fixed incentive for opponent cooperation \citep{akccay2011evolution};
employ a centralized operator on utilities in 2-player games with side payments \citep{sodomka2013coco};
and directly optimize collective performance by centralized rewarding in 2-player matrix games \citep{baumann2018adaptive}.
In contrast, we work with $N$-player Markov games with self-interested agents who must individually learn to incentivize other agents and cannot optimize collective performance directly.
Our technical approach is inspired by online cross validation \citep{sutton1992adapting}, which is used to optimize hyperparameters in meta-gradient RL \citep{xu2018meta}, and by the optimal reward framework \citep{singh2009rewards}, in which a single agent learns an intrinsic reward by ascending the gradient of its own extrinsic objective \citep{zheng2018learning}.





\section{Learning to incentivize others}
We design Learning to Incentivize Others (LIO), an agent that learns an incentive function by explicitly accounting for its impact on recipients' behavior, and through them, the impact on its own extrinsic objective.
For clarity, we describe the ideal case where agents have a perfect model of other agents' parameters and gradients; afterwards, we remove this assumption via opponent modeling.
We present the general case of $N$ LIO agents, indexed by $i \in [N] := \lbrace 1, \dotsc, N \rbrace$.
Each agent gives rewards using its incentive function and learns a regular policy with all received rewards.
For clarity, we use index $i$ when referring to the reward-giving part of an agent, and we use $j$ for the part that learns from received rewards.
For each agent $i$, let $o^i \defeq O^i(s) \in \Ocal$ denote its individual observation at global state $s$; $a^i \in \Acal^i$ its action; and $-i$ a collection of all indices except $i$.
Let $\abf$ and $\pibf$ denote the joint action and the joint policy over all agents, respectively.

A reward-giver agent $i$ learns a vector-valued incentive function $r_{\eta^i} \colon \Ocal \times \Acal^{-i} \mapsto \Rbb^{N-1}$, parameterized by $\eta^i \in \Rbb^n$, that maps its own observation $o^i$ and all other agents' actions $a^{-i}$ to a vector of rewards for the other $N-1$ agents\footnote{We do not allow LIO to reward itself, as our focus is on influencing \textit{other} agents' behavior. Nonetheless, LIO may be complemented by other methods for learning \textit{intrinsic} rewards \citep{zheng2018learning}.}.
Let $r_{\eta^i}^j$ denote the reward that agent $i$ gives to agent $j$.
As we elaborate below, $r_{\eta^i}$ is separate from the agent's conventional policy and is learned via direct gradient ascent on the agent's own extrinsic objective, involving its effect on all other agents' policies, instead of via RL.
Therefore, while it may appear that LIO has an augmented action space that provides an additional channel of influence on other agents, we emphasize that LIO's learning approach does \textit{not} treat the incentive as a standard ``give-reward'' action.

We build on the idea of online cross-validation \citep{sutton1992adapting}, to capture the fact that an incentive has measurable effect only after a recipient's learning step.
As such, we describe LIO in a procedural manner below (\Cref{alg:lio}).
This procedure can also be viewed as an iterative method for a bilevel optimization problem \citep{colson2007overview}, where the upper level optimizes the incentive function by accounting for recipients' policy optimization at the lower level.
At each time step $t$, each recipient $j$ receives a total reward
\begin{align}\label{eq:recipient-reward}
    r^j(s_t,\abf_t, \eta^{-j}) &\defeq r^{j,\text{env}}(s_t, \abf_t) + \sum_{i \neq j} r_{\eta^i}^j(o^i_t, a^{-i}_t) \, ,
\end{align}
where $r^{j,\text{env}}$ denotes agent $j$'s extrinsic reward.
Each agent $j$ learns a standard policy $\pi^j$, parameterized by $\theta^j \in \Rbb^m$, to maximize the objective
\begin{align}\label{eq:policy-objective}
    \max_{\theta^j} J^{\text{policy}}(\theta^j, \eta^{-j}) \defeq \Ebb_{\pibf}\left[ \sum_{t=0}^{T} \gamma^t r^j(s_t,\abf_t, \eta^{-j}) \right] \, .
\end{align}
Upon experiencing a trajectory $\tau^j \defeq (s_0, \abf_0, r^j_0, \dotsc, s_T)$, the recipient carries out an update
\begin{align}\label{eq:recipient-update}
    \hat{\theta}^j \leftarrow \theta^j + \beta f(\tau^j, \theta^j, \eta^{-j} )
\end{align}
that adjusts its policy parameters with learning rate $\beta$ (\Cref{alg:lio}, lines 4-5).
Assuming policy optimization learners in this work and choosing policy gradient for exposition, the update function is
\begin{align}\label{eq:recipient-gradient}
    f(\tau^j, \theta^j, \eta^{-j}) = \sum_{t=0}^{T} \nabla_{\theta^j} \log \pi^j(a^j_t|o^j_t)G^j_t(\tau^j; \eta^{-j}) \, ,
\end{align}
where the return $G^j_t(\tau^j, \eta^{-j}) = \sum_{l=t}^T \gamma^{l-t}r^j(s_l,\abf_l, \eta^{-j})$ depends on incentive parameters $\eta^{-j}$.

After each agent has updated its policy to $\hat{\pi}^j$, parameterized by new $\hat{\theta}^j$, it generates a new trajectory $\hat{\tau}^j$. 
Using these trajectories, each reward-giver $i$ updates its individual incentive function parameters $\eta^i$ to maximize the following individual objective (\Cref{alg:lio}, lines 6-7):
\begin{align}\label{eq:reward-objective}
    \max_{\eta^i} J^i(\hat{\tau}^i, \tau^i, \hat{\thetabf}, \eta^i) &\defeq \Ebb_{\hat{\pibf}}\left[ \sum_{t=0}^{T} \gamma^t \hat{r}^{i,\text{env}}_t \right] - \alpha L(\eta^i,\tau^i) \, .
\end{align}
The first term is the expected extrinsic return of the reward-giver in the new trajectory $\hat{\tau}^i$.
It implements the idea that the purpose of agent $i$'s incentive function is to alter other agents' behavior so as to maximize its extrinsic rewards.
The rewards it received from others are already accounted by its own policy update.
The second term is a cost for giving rewards in the first trajectory $\tau^i$:
\begin{align}\label{eq:reward-regularizer}
    L(\eta^i, \tau^i) \defeq \sum_{(o_t^i, a_t^{-i}) \in \tau^i} \gamma^t \lVert r_{\eta^i}(o^i_t,a^{-i}_t) \rVert_1 \, .
\end{align}
This cost is incurred by the incentive function and not by the policy, since the latter does not determine incentivization\footnote{Note that the 
outputs of the incentive function and policy
are conditionally independent given the agent's observation, but their separate learning processes are coupled via the learning process of other agents.} and should not be penalized for the incentive function's behavior (see \Cref{app:cost} for more discussion).
We use the $\ell_1$-norm so that cost has the same physical ``units'' as extrinsic rewards.
The gradient of \eqref{eq:reward-regularizer} is directly available, assuming $r_{\eta^i}$ is a known function approximator (e.g., neural network).
Letting $J^i(\hat{\tau}^i,\hat{\thetabf})$ denote the first term in \eqref{eq:reward-objective}, the gradient w.r.t. $\eta^i$ is:
\begin{align}\label{eq:chain-rule}
    \nabla_{\eta^i} J^i(\hat{\tau}^i,\hat{\thetabf}) &= \sum_{j\neq i} (\nabla_{\eta^i} \hat{\theta}^j)^T \nabla_{\hat{\theta}^j} J^i(\hat{\tau}^i,\hat{\thetabf}) \, .
\end{align}
The first factor of each term in the summation follows directly from \eqref{eq:recipient-update} and \eqref{eq:recipient-gradient}:
\begin{align}
    \nabla_{\eta^i}\hat{\theta}^j = \beta \sum_{t=0}^{T} \nabla_{\theta^j} \log \pi^j(a^j_t|o^j_t)\left(\nabla_{\eta^i} G^j_t(\tau^j;\eta^{-j}) \right)^T \, .
\end{align}
Note that \eqref{eq:recipient-update} does not contain recursive dependence of $\theta^j$ on $\eta^i$ since $\theta^j$ is a function of incentives in \textit{previous} episodes, not those in trajectory $\tau^i$.
The second factor in \eqref{eq:chain-rule} can be derived as 
\begin{align}\label{eq:chain-rule-factor}
    \nabla_{\hat{\theta}^j} J^i(\hat{\tau}^i,\hat{\thetabf}) = \Ebb_{\hat{\pibf}}\left[ \nabla_{\hat{\theta}^j} \log \hat{\pi}^j(\ahat^j|\ohat^j)Q^{i,\hat{\pibf}}(\hat{s}, \hat{\abf}) \right] \, .
\end{align}
In practice, to avoid manually computing the matrix-vector product in \eqref{eq:chain-rule}, one can define the loss
\begin{align}\label{eq:eta-loss}
    \text{Loss}(\eta^i, \hat{\tau}^i) \defeq - \sum_{j \neq i}\sum_{t=0}^T \log \pi_{\hat{\theta}^j}(\hat{a}^j_t|\hat{o}^j_t) \sum_{l=t}^T \gamma^{l-t}r^{i, \text{env}}(\hat{s}_l,\hat{\abf}_l) \, ,
\end{align}
and directly minimize it via automatic differentiation \citep{abadi2016tensorflow}.
Crucially, $\hat{\theta}^j$ must preserve the functional dependence of the policy update step \eqref{eq:recipient-gradient} on $\eta^i$ within the same computation graph.
Derivations of \eqref{eq:chain-rule-factor} and \eqref{eq:eta-loss} are similar to that for policy gradients \citep{sutton2000policy} and are provided in \Cref{app:derivation}.


\begin{algorithm}[t]
\caption{Learning to Incentivize Others}
\label{alg:lio}
\begin{algorithmic}[1]
\Procedure{Train LIO agents}{}
\State Initialize all agents' policy parameters $\theta^i$, incentive function parameters $\eta^i$ 
\For{each iteration}
    \State Generate a trajectory $\lbrace \tau^j \rbrace$ using $\thetabf$ and $\etabf$
    \State For all reward-recipients $j$, update $\hat{\theta}^j$ using \eqref{eq:recipient-update}
    \State Generate a new trajectory $\lbrace \hat{\tau}^i \rbrace$ using new $\hat{\thetabf}$
    \State For reward-givers $i$, compute new $\hat{\eta}^i$ by gradient ascent on \eqref{eq:reward-objective}
    \State $\theta^i \leftarrow \hat{\theta}^i$, $\eta^i \leftarrow \hat{\eta}^i$ for all $i \in [N]$.
\EndFor
\EndProcedure
\end{algorithmic}
\end{algorithm}
LIO is compatible with the goal of achieving emergent cooperation in fully-decentralized MARL, as
agents already learn individual sets of parameters to maximize individual objectives.
One may directly apply opponent modeling \citep{albrecht2018autonomous} when
LIO can observe, or estimate, other agents' egocentric observations, actions, and individual rewards, and have common knowledge that all agents conduct policy updates via reinforcement learning.
These requirements are satisfied in environments where incentivization itself is feasible, since these observations are required for rational incentivization.
LIO may then fit a behavior model for each opponent,
create an internal model of other agents' RL processes, and learn the incentive function by differentiating through fictitious updates using the model in place of \eqref{eq:recipient-update}.
We demonstrate a fully-decentralized implementation in our experiments.

\vspace{-1mm}
\subsection{Relation to opponent shaping via actions}
\label{subsec:diff-from-lola}
\vspace{-1mm}
LIO conducts opponent shaping via the incentive function.
This resembles LOLA \citep{foerster2018learning}, but there are key algorithmic differences.
Firstly, LIO's incentive function is trained separately from its policy parameters, while opponent shaping in LOLA depends solely on the policy.
Secondly, the LOLA gradient correction for agent $i$ is derived from $\nabla_{\theta^i} J^i(\theta^i, \theta^j + \Delta \theta^j)$ under Taylor expansion, but LOLA disregards a term with $\nabla_{\theta^i} \nabla_{\theta^j} J^i(\theta^i,\theta^j)$ even though it is non-zero in general.
In contrast, LIO is constructed from the principle of online cross-validation \citep{sutton1992adapting}, not Taylor expansion, and hence this particular mixed derivative is absent---the analogue for LIO would be $\nabla_{\eta^i}\nabla_{\theta^j} J^i(\theta^i, \theta^j)$, which is zero because incentive parameters $\eta^i$ affect all agents \textit{except} agent $i$.
Thirdly, LOLA optimizes its objective assuming one step of opponent learning, \textit{before} the opponent actually does so \citep{letcher2019stable}.
In contrast, LIO updates the incentive function \textit{after} recipients carry out policy updates using received incentives.
This gives LIO a more accurate measurement of the impact of incentives, which reduces variance and increases performance, as we demonstrate experimentally in \Cref{app:asymmetric} by comparing with a 1-episode variant of LIO that does not wait for opponent updates.
Finally, by adding differentiable reward channels to the environment, which is feasible in many settings with side payments \citep{jackson2005endogenous}, LIO is closer in spirit to the paradigm of optimized rewards \citep{singh2009rewards,jaderberg2019human,wang2019evolving}.

\FloatBarrier

\begin{figure}[t]
\centering
\begin{subfigure}[t]{0.24\linewidth}
    \centering
    \includegraphics[width=1.0\linewidth]{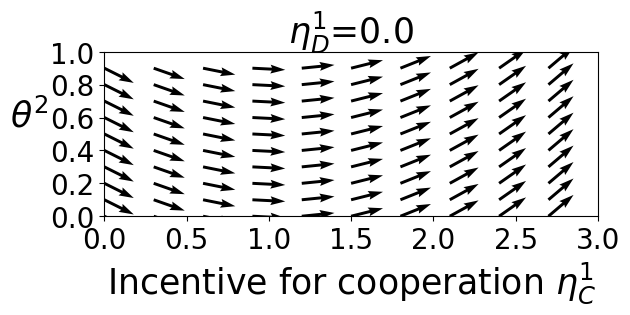}
    \label{fig:vf-t2-e1c-1}
\end{subfigure}
\hfill
\begin{subfigure}[t]{0.24\linewidth}
    \centering
    \includegraphics[width=1.0\linewidth]{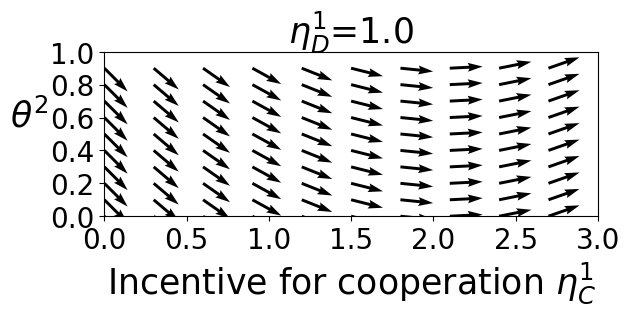}
    \label{fig:vf-t2-e1c-2}
\end{subfigure}
\hfill
\begin{subfigure}[t]{0.24\linewidth}
    \centering
    \includegraphics[width=1.0\linewidth]{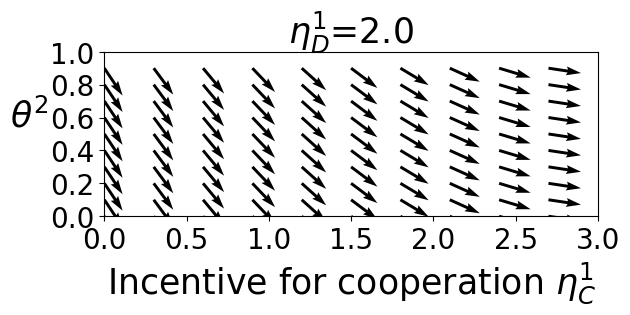}
    \label{fig:vf-t2-e1c-3}
\end{subfigure}
\hfill
\begin{subfigure}[t]{0.24\linewidth}
    \centering
    \includegraphics[width=1.0\linewidth]{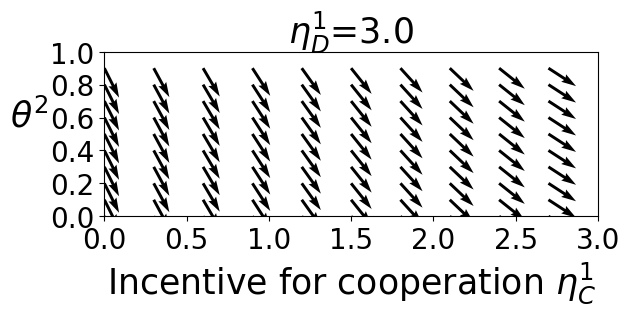}
    \label{fig:vf-t2-e1c-4}
\end{subfigure}

\vspace{-2mm}
\caption{Exact LIO in IPD: probability of recipient cooperation versus incentive for cooperation. 
}
\vspace{-2mm}
\label{fig:vf-t2-e1}
\end{figure}

\vspace{-2mm}
\subsection{Analysis in Iterated Prisoner's Dilemma}
\label{subsec:ipd-analysis}
\vspace{-1mm}

\begin{wraptable}{r}{0.32\textwidth}
    \vspace{-15pt}
    \caption{Prisoner's Dilemma}
    \label{table:ipd-payoff}
    \centering
    \begin{tabular}{c|cc}
        A1/A2 & C & D \\
        \hline
        C & (-1, -1) & (-3, 0) \\
        D & (0, -3) & (-2, -2)\\
    \end{tabular}
    \vspace{-10pt}
\end{wraptable}
LIO poses a challenge for theoretical analysis in general Markov games because each agent's policy and incentive function are updated using different trajectories but are coupled through the RL updates of all other agents.
Nonetheless, a complete analysis of exact LIO---using closed-form gradient ascent without policy gradient approximation---is tractable in repeated matrix games.
In the stateless Iterated Prisoner's Dilemma (IPD), for example, with payoff matrix in \Cref{table:ipd-payoff}, we prove in \Cref{app:analysis} 
the following:
\begin{restatable}{proposition}{propIPD}
\label{prop:lio-ipd}
Two LIO agents converge to mutual cooperation in the Iterated Prisoner's Dilemma.
\end{restatable}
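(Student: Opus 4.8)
Because the IPD is stateless and its rounds are i.i.d., the discounted objectives \eqref{eq:policy-objective} and \eqref{eq:reward-objective} collapse, up to the common factor $1/(1-\gamma)$, to expected single-round payoffs, so it suffices to analyze the induced dynamical system in the four scalars $(\theta^1,\theta^2,\eta^1,\eta^2)$. I write $x=\pi^1(C)$, $y=\pi^2(C)$ for the cooperation probabilities under a direct (clipped) policy parametrization, and let $\eta^i$ denote the scalar reward agent $i$ pays the other agent for playing $C$ --- recall $r_{\eta^i}$ is $\Rbb^{N-1}=\Rbb$-valued here, and the component paid for $D$ plays no essential role (it is driven toward $0$ by the $\ell_1$ cost, or else acts as a costless sanction once the recipient cooperates). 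The first step is to write the exact updates: the recipient step \eqref{eq:recipient-update}--\eqref{eq:recipient-gradient} is a gradient step on agent $j$'s Prisoner's Dilemma payoff augmented by the received incentive, and the reward-giver step is ascent on \eqref{eq:reward-objective}, with gradient \eqref{eq:chain-rule}--\eqref{eq:chain-rule-factor} minus $\alpha\nabla_{\eta^i}L(\eta^i,\tau^i)$ from \eqref{eq:reward-regularizer}. Passing to the small-learning-rate limit turns this into a coupled autonomous system whose trajectories I would track.

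\textbf{Two monotonicity facts drive the argument.} First, in this payoff matrix the advantage of $D$ over $C$ is exactly $1$ for each player \emph{irrespective} of the opponent's action, so agent $j$'s augmented per-round payoff has derivative $\eta^{-j}-1$ in its own cooperation probability; hence the recipient update pushes $y$ monotonically toward the boundary $1$ whenever $\eta^{-j}>1$ and toward $0$ otherwise. Second, agent $i$'s own extrinsic payoff rises by exactly $2$ per unit increase in the recipient's cooperation probability, so $\nabla_{\hat\theta^j}J^i>0$ in \eqref{eq:chain-rule-factor}; combined with $\partial\hat\theta^j/\partial\eta^i>0$ (from \eqref{eq:recipient-update}, while the recipient policy is interior and actually plays $C$ with positive probability), the benefit term of $\nabla_{\eta^i}J^i$ is strictly positive, opposed only by an $\ell_1$-cost gradient of magnitude $O(\alpha)$. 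The phase analysis then runs: (i) from small initial incentives, provided $\alpha$ is small relative to the incentive learning rate, each $\eta^i$ is driven upward past the threshold $1$; (ii) once $\eta^i>1$ the recipient's policy increases monotonically and clips at $y=1$ (resp. $x=1$); (iii) at the clipped boundary $\partial\hat\theta^j/\partial\eta^i=0$, so the benefit gradient vanishes and the cost pulls $\eta^i$ back down, but $\eta^i$ cannot drop below $1$ without reactivating the positive recipient gradient, so it settles at (a neighborhood of) the threshold while cooperation persists. By symmetry of the two LIO agents the same holds with the roles interchanged, giving $(x,y)\to(1,1)$.

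\textbf{Finishing and the main obstacle.} It remains to verify that the joint state with $x=y=1$ and the limiting incentives is a fixed point of the coupled update and is stable --- a small downward perturbation of $y$ reopens the strictly positive $\nabla_{\eta^i}$ and restores $y$ to the boundary --- and to confirm that the discrete, interleaved scheme of \Cref{alg:lio} (policies and incentives updated on different trajectories, coupled only through each other's learning steps) shadows this continuous picture for small step sizes. The main obstacle is that the system also has a competing mutual-defection fixed point: if the recipient's policy reaches $y=0$ it never plays $C$, so $\partial\hat\theta^j/\partial\eta^i=0$ and the incentive gradient stalls, trapping the agents at $(y=0,\eta^i<1)$. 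The crux of the proof is therefore a \emph{race}: one must show the incentives escape the region $\eta^i\le 1$ --- where defection is still dominant for the recipient and its policy is collapsing toward $D$ --- before the recipient is absorbed at the boundary $y=0$. This is where the relative magnitudes of the incentive learning rate, the policy learning rate $\beta$, the horizon/discount $\gamma$, and the regularization weight $\alpha$ must be controlled (and where the clipping behaviour at the simplex boundary needs careful bookkeeping); establishing that the good basin is entered under the stated conditions is the only genuinely delicate step.
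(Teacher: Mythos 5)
Your plan identifies the two correct structural facts (the defection advantage of $1$ makes $\eta^{-j}_C-\eta^{-j}_D>1$ the cooperation threshold for the recipient, and the giver's marginal benefit of $2$ per unit of recipient cooperation makes the incentive gradient positive), and these are exactly the quantities the paper's proof computes. But you explicitly leave unresolved what you correctly call the only delicate step --- the ``race'' against absorption at mutual defection --- and that step is where the paper's argument actually does the work, so as it stands the proposal has a genuine gap rather than a proof.

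The paper closes this gap by computing the exact gradients under a direct probability parametrization $\theta^j=\Pr(C)$: the recipient's update is $\hat\theta^2=\theta^2+\tfrac{\alpha}{1-\gamma}(\eta^1_C-\eta^1_D-1)$, so $\nabla_{\eta^1}\hat\theta^2=\tfrac{\alpha}{1-\gamma}[1,-1]^{\!\top}$ is a \emph{constant}, and the coefficient multiplying it in the giver's update, $B_2=\partial(\hat p^{\top}r^1)/\partial\hat\theta^2$, evaluates identically to $2$ for every value of $\hat\theta^1$ and $\hat\theta^2$. Hence the incentive parameters move in the fixed direction $(+1,-1)$ at a fixed positive rate \emph{unconditionally} --- in particular even if the recipient currently cooperates with probability $0$ --- so $\eta^i_C-\eta^i_D$ crosses $1$ after finitely many updates, after which both policies increase monotonically toward cooperation. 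There is no competing mutual-defection fixed point and no race to control. Your worry that $\partial\hat\theta^j/\partial\eta^i$ vanishes when the recipient never plays $C$ is an artifact of imagining a sampled REINFORCE/log-likelihood update (where the incentive for $C$ only enters when $C$ is actually drawn); the proposition concerns \emph{exact} LIO with closed-form expectations, where that derivative never vanishes. Two smaller mismatches with the paper's setting: the cost term is dropped there ($\alpha=0$ in the IPD), so your ``settles at the threshold under the $\ell_1$ pull'' phase does not arise; and the defection incentive $\eta^i_D$ is not inessential --- it is driven \emph{down} by the same constant-direction update and is half of the threshold condition $\eta^i_C-\eta^i_D-1>0$. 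If you want to complete your version of the argument (with sampling, softmax/clipped parametrizations, or $\alpha>0$), you would indeed need the learning-rate conditions you sketch, but that is a strictly harder statement than the one the paper proves.
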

\vspace{-5pt}
Moreover, we may gain further insight by visualizing the learning dynamics of exact LIO in the IPD, computed in \Cref{app:analysis}.
Let $\eta^1 \defeq [\eta^1_C, \eta^1_D] \in [0,3]^2$ be the incentives that Agent 1 gives to Agent 2 for cooperation (C) and defection (D), respectively.
Let $\theta^2$ denote Agent 2's probability of cooperation.
In \Cref{fig:vf-t2-e1}, the curvature of vector fields shows guaranteed increase in probability of recipient cooperation $\theta^2$ (vertical axis) along with increase in incentive value $\eta^1_C$ received for cooperation (horizontal axis).
For higher values of incentive for defection $\eta^1_D$, greater values of $\eta^1_C$ are needed for $\theta^2$ to increase.
\Cref{fig:app-vf-t2-e1} shows that incentive for defection is guaranteed to decrease.

\vspace{-2mm}
\section{Experimental setup}
\label{sec:experiment}
\vspace{-2mm}
Our experiments\footnote{Code for all experiments is available at \url{https://github.com/011235813/lio}} demonstrate that LIO agents are able to reach near-optimal individual performance by incentivizing other agents in cooperation problems with conflicting individual and group utilities.
We define three different environments with increasing complexity in \Cref{subsec:environments} and describe the implementation of our method and baselines in \Cref{subsec:implementation}.

\vspace{-1mm}
\subsection{Environments}
\label{subsec:environments}


\textbf{Iterated Prisoner's Dilemma (IPD).}
We test LIO on the memory-1 IPD as defined in \citep{foerster2018learning}, where agents observe the joint action taken in the previous round and receive extrinsic rewards in \Cref{table:ipd-payoff}.
This serves as a test of our theoretical prediction in \Cref{subsec:ipd-analysis}.

\textbf{$N$-Player Escape Room (ER).}
We experiment on the $N$-player Escape Room game shown in \Cref{fig:symmetric-room} (\Cref{sec:intro}).
By symmetry, any agent can receive positive extrinsic reward, as long as there are enough cooperators.
Hence, for methods that allow incentivization, every agent is both a reward giver and recipient.
We experiment with the cases $(N = 2, M=1)$ and $(N = 3, M = 2)$.
We also describe an asymmetric 2-player case and results in \Cref{app:asymmetric}.

\begin{wrapfigure}{r}{0.35\textwidth}
    \vspace{-20pt}
    \centering

    \includegraphics[width=1.0\linewidth]{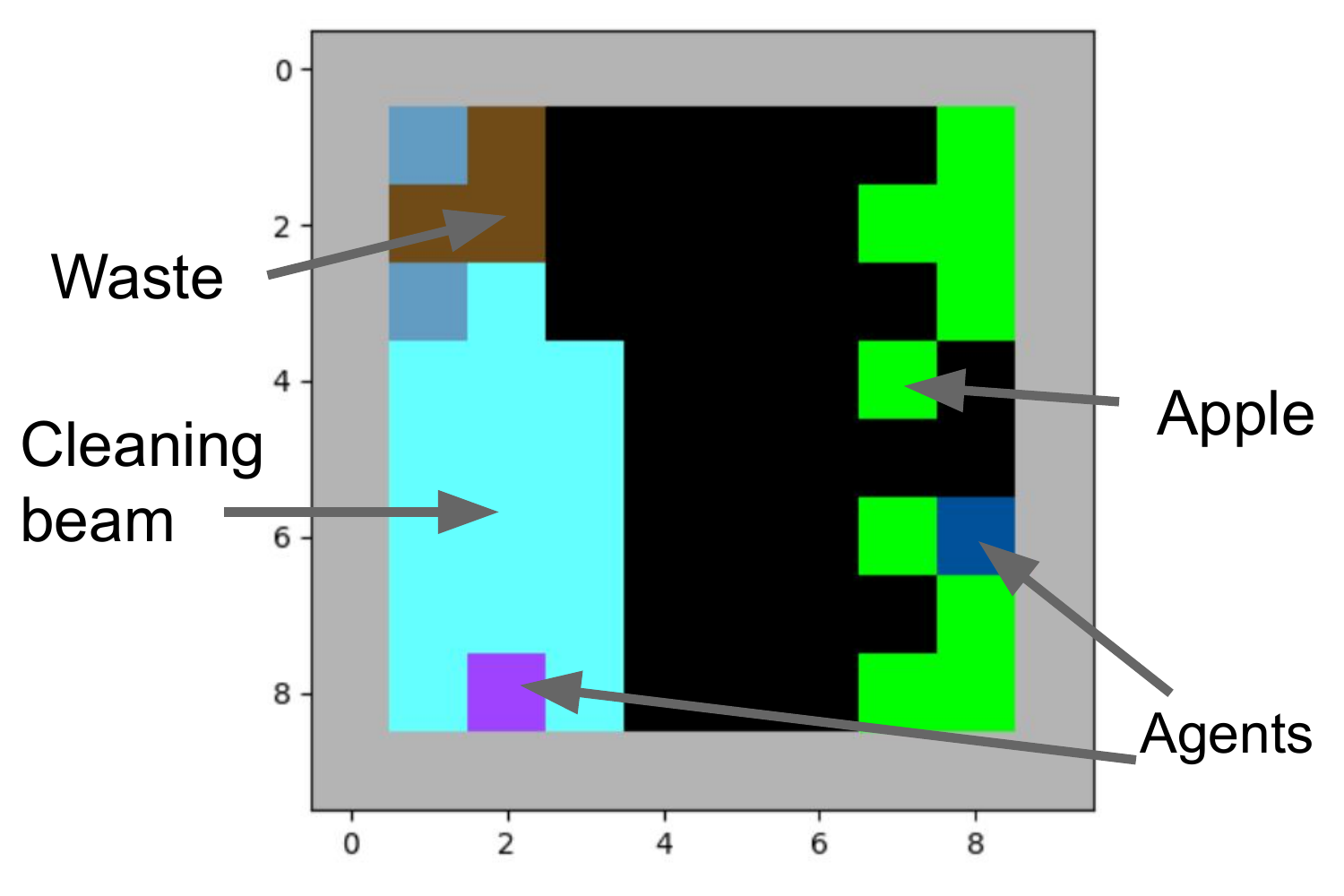}
    \caption{\textit{Cleanup} (10x10 map): apple spawn rate decreases with increasing waste, which agents can clear with a cleaning beam.}
    \label{fig:cleanup-10x10-map}
    \vspace{-25pt}
\end{wrapfigure}
\textbf{Cleanup.}
Furthermore, we conduct experiments on the Cleanup game (\Cref{fig:cleanup-10x10-map}) \citep{hughes2018inequity,wang2019evolving}.
Agents get +1 individual reward by collecting apples, which spawn on the right hand side of the map at a rate that decreases linearly to zero as the amount of waste in a river approaches a depletion threshold.
Each episode starts with a waste level above the threshold and no apples present.
While an agent can contribute to the public good by firing a cleaning beam to clear waste, it can only do so at the river as its fixed orientation points upward.
This would enable other agents to defect and selfishly collect apples, resulting in a difficult social dilemma.
Each agent has an egocentric RGB image observation that spans the entire map.

\vspace{-2mm}
\subsection{Implementation and baselines}
\label{subsec:implementation}
\vspace{-2mm}
We describe key details here and provide a complete description in \Cref{app:implementation}.
In each method, all agents have the same implementation without sharing parameters.
The incentive function of a LIO agent is a neural network defined as follows: its input is the concatenation of the agent's observation and all other agents' chosen actions;
the output layer has size $N$, sigmoid activation, and is scaled element-wise by a multiplier $R_{\text{max}}$;
each output node $j$, which is bounded in $[0, R_{\text{max}}]$, is interpreted as the real-valued reward given to agent with index $j$ in the game (we zero-out the value it gives to itself).
We chose $R_{\text{max}}=[3,2,2]$ for [IPD, ER, Cleanup], respectively, so that incentives can overcome any extrinsic penalty or opportunity cost for cooperation.
We use on-policy learning with policy gradient for each agent in IPD and ER, and actor-critic for Cleanup.
To ensure that all agents' policies perform sufficient exploration for the effect of incentives to be discovered, we include an exploration lower bound $\epsilon$ such that $\tilde{\pi}(a|s) = (1 - \epsilon) \pi(a|s) + \epsilon / |\Acal|$, with linearly decreasing $\epsilon$.

\textbf{Fully-decentralized implementation (LIO-dec).}
Each decentralized LIO agent $i$ learns a model of another agent's policy parameters $\theta^j$ via $\theta^j_{\text{estimate}} = \argmax_{\theta^j} \sum_{(o^j_t,a^j_t) \in \tau} \log \pi_{\theta^j}(a^j_t|o^j_t)$ at the end of each episode $\tau$.
With knowledge of agent $j$'s egocentric observation and individual rewards, it conducts incentive function updates using a fictitious policy update in \eqref{eq:recipient-update} with $\theta^j_{\text{estimate}}$ in place of $\theta^j$.

\textbf{Baselines.}
The first baseline is independent policy gradient, labeled \textbf{PG}, which has the same architecture as the policy part of LIO.
Second, we augment policy gradient with discrete ``give-reward'' actions, labeled \textbf{PG-d}, whose action space is $\Acal \times \lbrace \text{no-op}, \text{give-reward} \rbrace^{N-1}$.
We try reward values in the set $\lbrace 2, 1.5, 1.1 \rbrace$.
Giving reward incurs an equivalent cost.
Next, we design a more flexible policy gradient baseline called \textbf{PG-c}, which has continuous give-reward actions.
It has an augmented action space $\Acal \times [0,R_{\text{max}}]^{N-1}$ and learns a factorized policy $\pi(a_d,a_r|o) \defeq \pi(a_d|o) \pi(a_r|o)$, where $a_d \in \Acal$ is the regular discrete action and $a_r \in [0, R_{\text{max}}]^{N-1}$ is a vector of incentives given to the other $N-1$ agents.
\Cref{app:implementation} describes how PG-c is trained.
In ER, we run \textbf{LOLA-d} and \textbf{LOLA-c} with the same augmentation scheme as PG-d and PG-c.
In Cleanup, we compare with independent actor-critic agents (\textbf{AC-d} and \textbf{AC-c}), which are analogously augmented with ``give-reward'' actions,
and with inequity aversion (\textbf{IA}) agents \citep{hughes2018inequity}.
We also show the approximate upper bound on performance by training a fully-centralized actor-critic (\textbf{Cen}) that is (unfairly) allowed to optimize joint reward.

\section{Results}
\label{sec:results}
\vspace{-1mm}
We find that LIO agents reach near-optimal \textit{collective} performance in all three environments, despite being designed to optimize only \textit{individual} rewards.
This arose in ER and Cleanup because incentivization enabled agents to find an optimal division of labor\footnote{Learned behavior in Cleanup can be viewed at \url{https://sites.google.com/view/neurips2020-lio}}
and in IPD where LIO is proven to converge to the CC solution.
In contrast, various baselines displayed competitive behavior that led to suboptimal solutions, were not robust across random seeds, or failed to cooperate altogether.
We report the results of 20 independent runs for IPD and ER, and 5 runs for Cleanup.

\begin{wrapfigure}{r}{0.25\textwidth}
    \vspace{-12pt}
  \centering
    \includegraphics[width=0.25\textwidth]{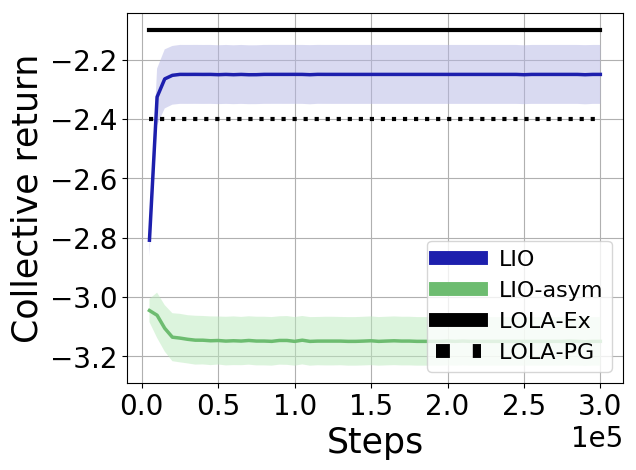}
  \caption{The sum of all agents' rewards in IPD.}
  \label{fig:ipd}
  \vspace{-10pt}
\end{wrapfigure}
\textbf{Iterated Prisoner's Dilemma.}
In accord with the theoretical prediction of exact LIO in \Cref{subsec:ipd-analysis} and \Cref{app:analysis}, two LIO agents with policy gradient approximation converge near the optimal CC solution with joint reward -2 in the IPD (\Cref{fig:ipd}).
This meets the performance of LOLA-PG and is close to LOLA-Ex, as reported in \citep{foerster2018learning}.
In the asymmetric case (LIO-asym) where one LIO agent is paired with a PG agent, we indeed find that they converge to the DC solution: PG is incentivized to cooperate while LIO defects, resulting in collective reward near -3.


\begin{figure*}[t]
\centering
\begin{subfigure}[t]{0.25\linewidth}
    \centering
    \includegraphics[width=1.0\linewidth]{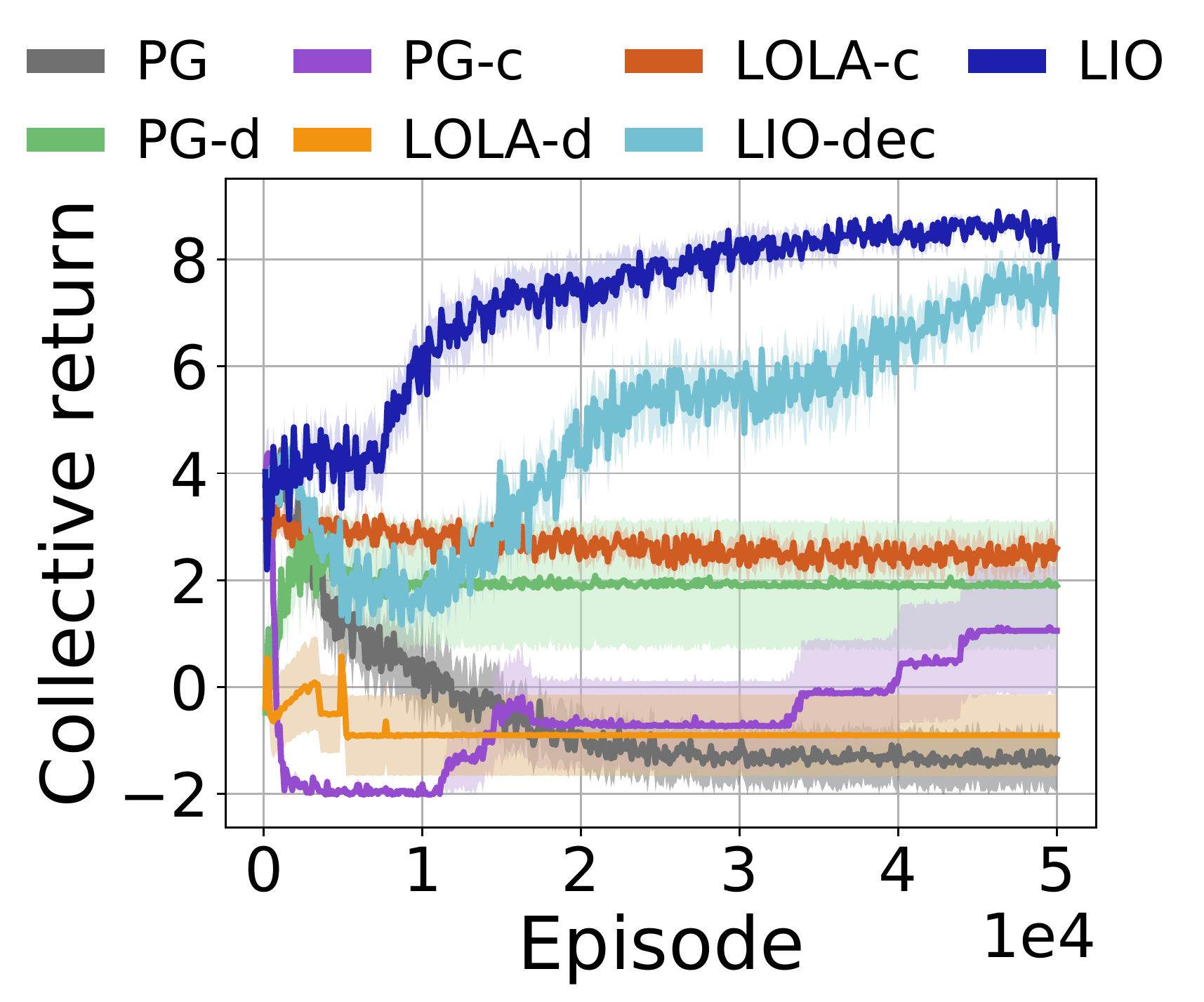}
    \caption{Collective return in $N$=2}
    \label{fig:er_n2}
\end{subfigure}
\hfill
\begin{subfigure}[t]{0.24\linewidth}
    \centering
    \includegraphics[width=1.0\linewidth]{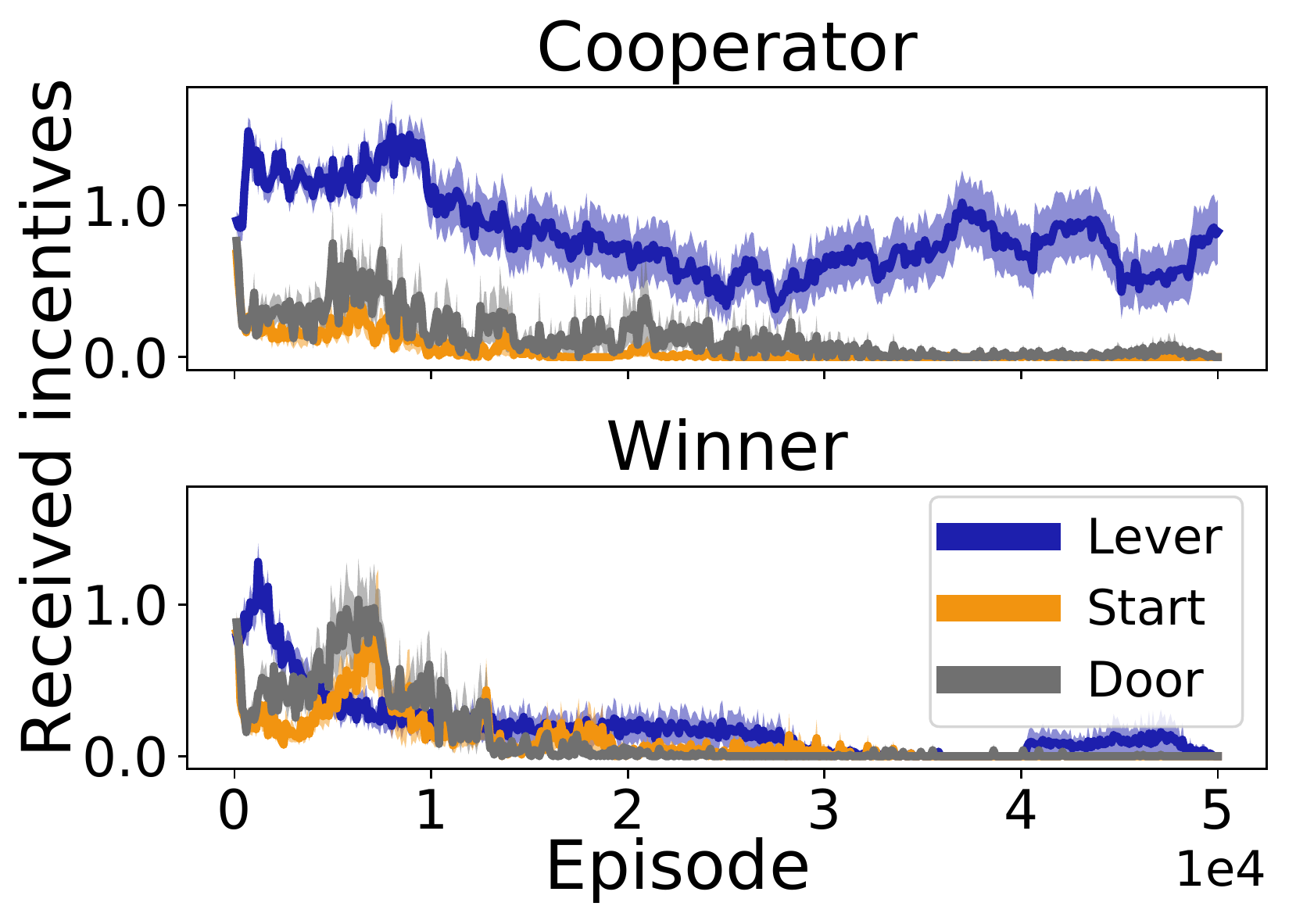}
    \caption{Incentives in $N$=2}
    \label{fig:er_n2_received}
\end{subfigure}
\hfill
\begin{subfigure}[t]{0.25\linewidth}
    \centering
    \includegraphics[width=1.0\linewidth]{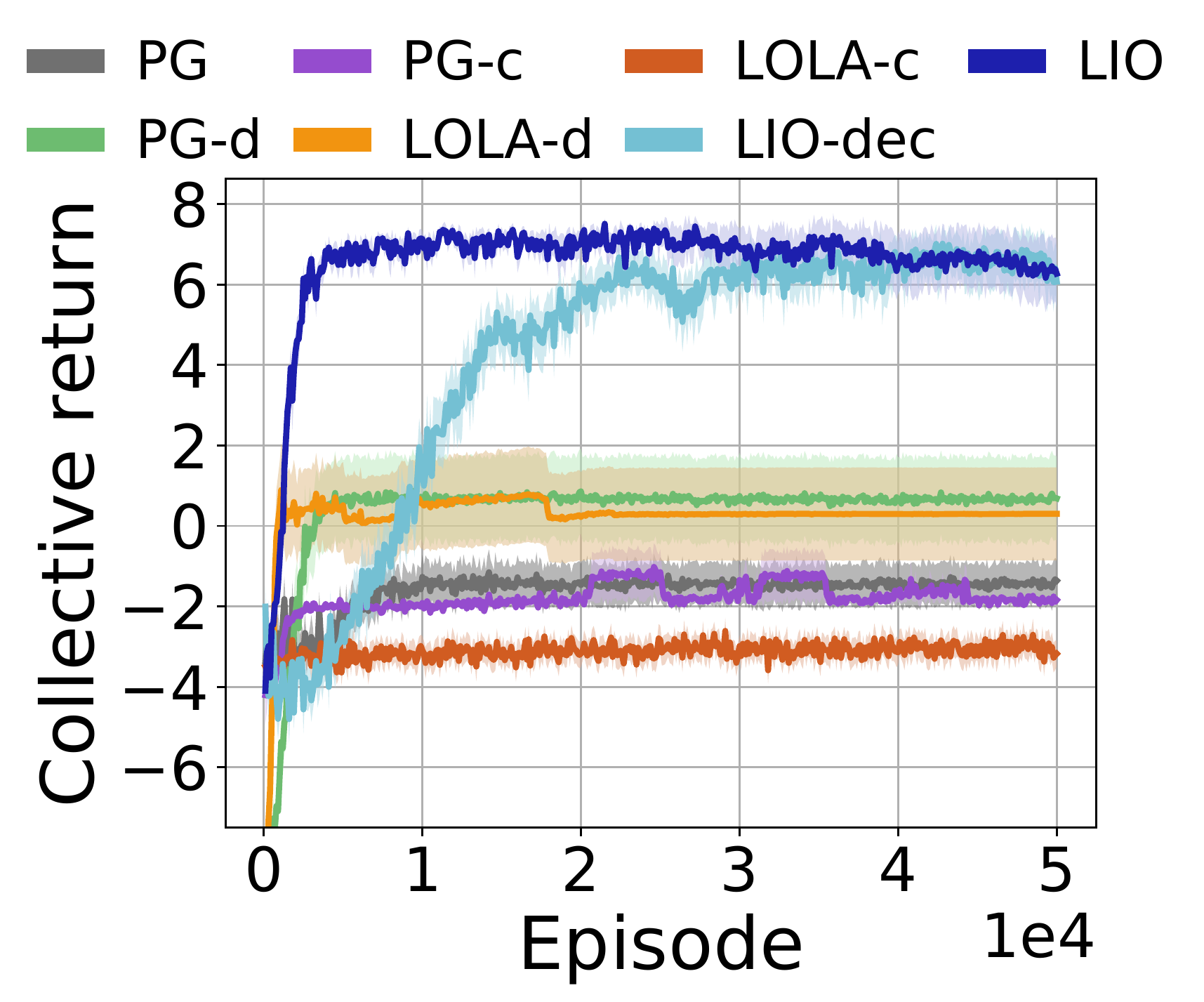}
    \caption{Collective return in $N$=3}
    \label{fig:er_n3}
\end{subfigure}
\hfill
\begin{subfigure}[t]{0.24\linewidth}
    \centering
    \includegraphics[width=1.0\linewidth]{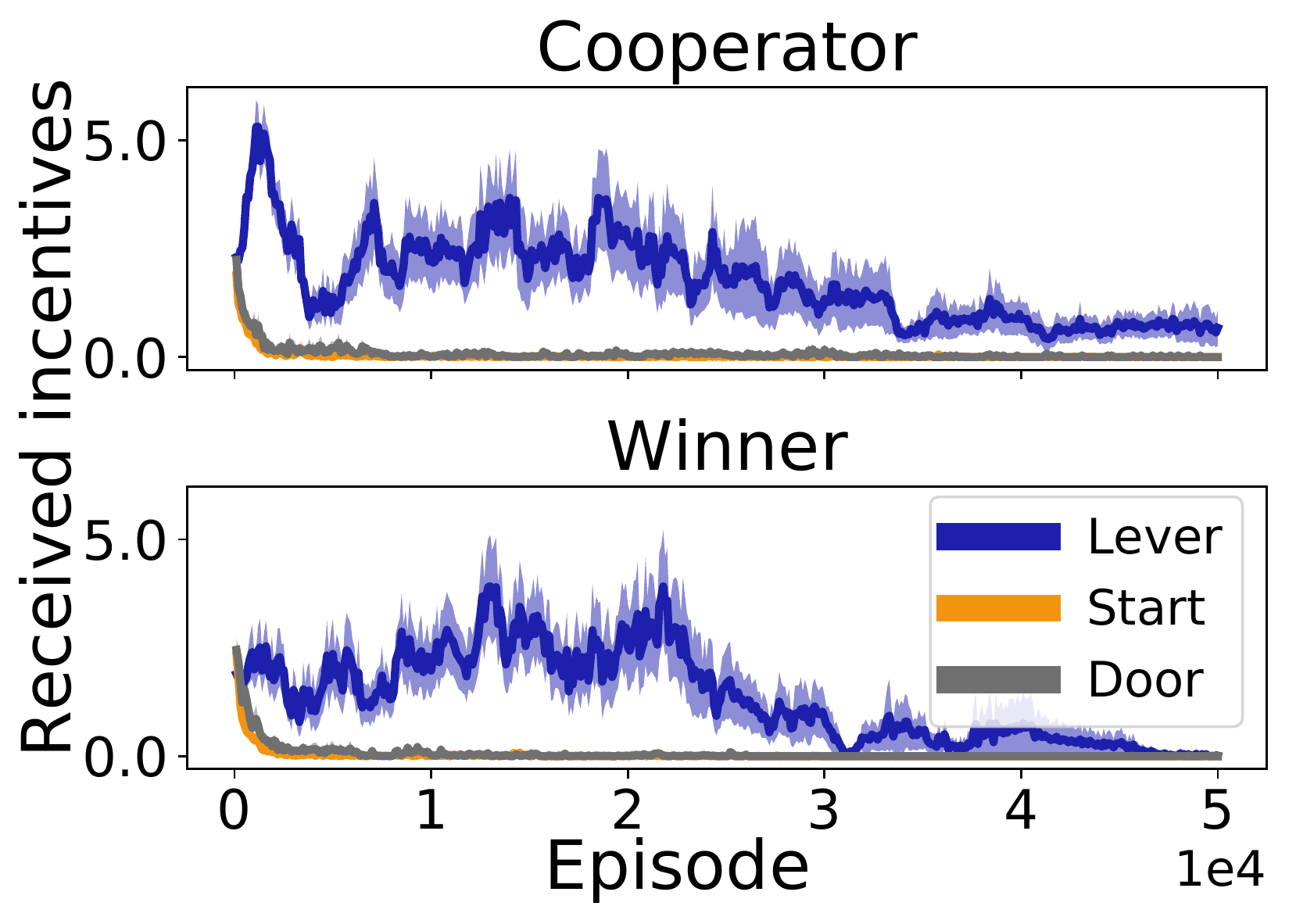}
    \caption{Incentives in $N$=3}
    \label{fig:er_n3_received}
\end{subfigure}
\vspace{-1mm}
\caption{Escape Room. (a,c) LIO agents converge near the global optimum with value 9 (N=2) and 8 (N=3). (b,d) Incentives received for each action by the agent who ends up going to the lever/door.}
\label{fig:results_er}
\vspace{-2mm}
\end{figure*}


\textbf{Escape Room.}
\Cref{fig:er_n2,fig:er_n3} show that groups of LIO agents discover a division of labor in both ER(2,1) and ER(3,2), whereby some agent(s) cooperate by pulling the lever to allow another agent to exit the door, such that collective return approaches the optimal value (9 for the 2-player case, 8 for the 3-player case).
Fully-decentralized LIO-dec successfully solved both cases, albeit with slower learning speed.
As expected, PG agents were unable to find a cooperative solution: they either stay at the start state or greedily move to the door, resulting in negative collective return.
The augmented baselines PG-d and PG-c sometimes successfully influence the learning of another agent to solve the game, but exhibit high variance across independent runs.
This is strong evidence that conventional RL alone is not well suited for learning to incentivize, as the effect of ``give-reward'' actions manifests only in future episodes.
LOLA succeeds sometimes but with high variance, as it does not benefit from the stabilizing effects of online cross-validation and separation of the incentivization channel from regular actions.
\Cref{app:asymmetric} contains results in an asymmetric case (LIO paired with PG), where we compare to an additional heuristic two-timescale baseline and a variant of LIO.
\Cref{fig:er_n5} evidences that LIO scales well to larger groups such as ER(5,3), since the complexity of \eqref{eq:chain-rule} is linear in number of agents.

To understand the behavior of LIO's incentive function, we classify each agent \textit{at the end of training} as a ``Cooperator'' or ``Winner'' based on whether its final policy has greater probability of going to the lever or door, respectively.
For each agent type, aggregating over all agents of that type, we measure incentives received by that agent type when it takes each of the three actions during training.
\Cref{fig:er_n2_received,fig:er_n3_received} show that the Cooperator was correctly incentivized for pulling the lever and receives negligible incentives for noncooperative actions.
Asymptotically, the Winner receives negligible incentives from the Cooperator(s), who learned to avoid the cost for incentivization \eqref{eq:reward-regularizer} when doing so has no benefits itself, whereas incentives are still nonzero for the Cooperator.

\begin{figure*}[t]
\centering
\begin{subfigure}[t]{0.24\linewidth}
    \centering
    \includegraphics[width=1.0\linewidth]{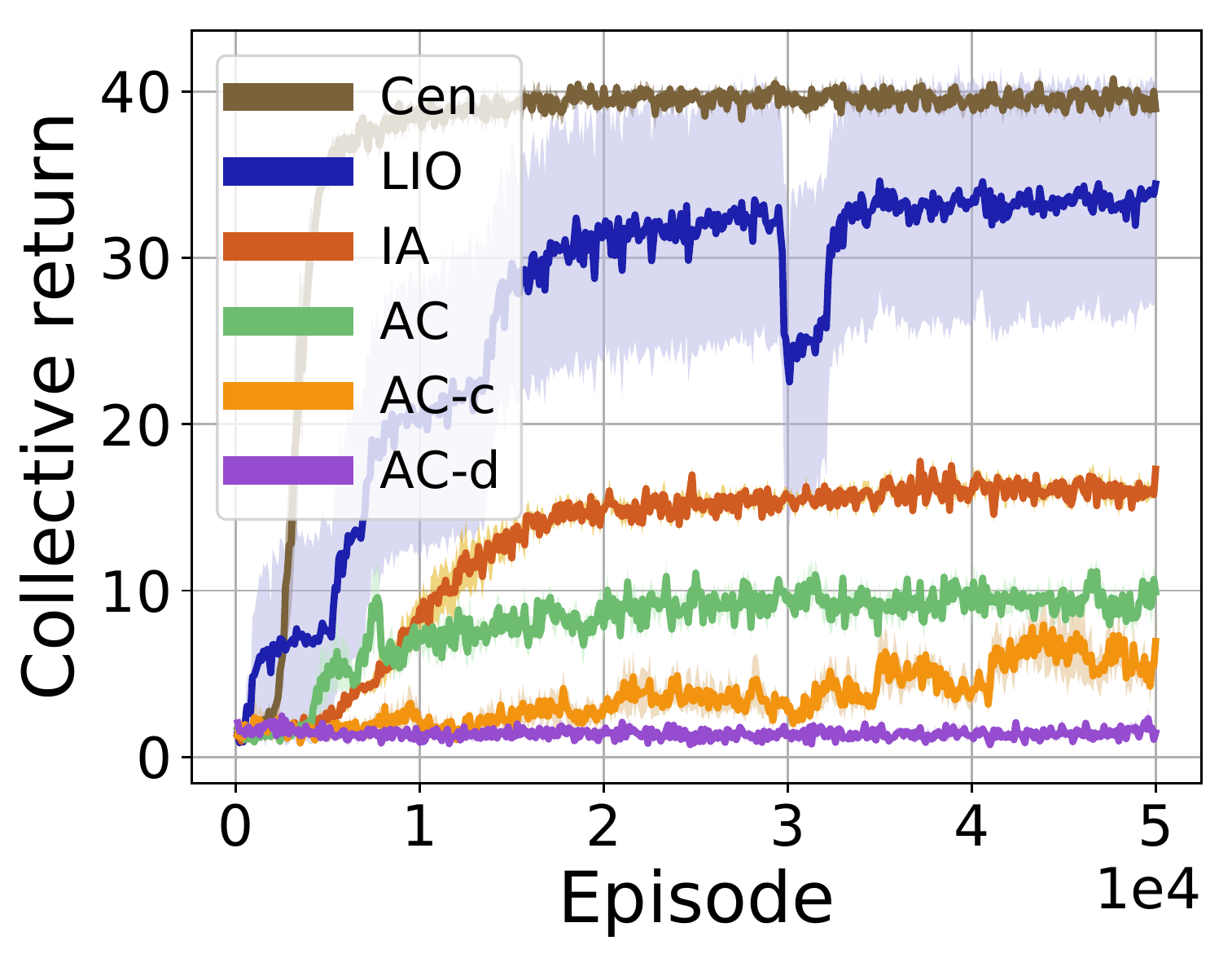}
    \caption{Cleanup 7x7}
    \label{fig:cleanup-small-reward}
\end{subfigure}
\hfill
\begin{subfigure}[t]{0.25\linewidth}
    \centering
    \includegraphics[width=1.0\linewidth]{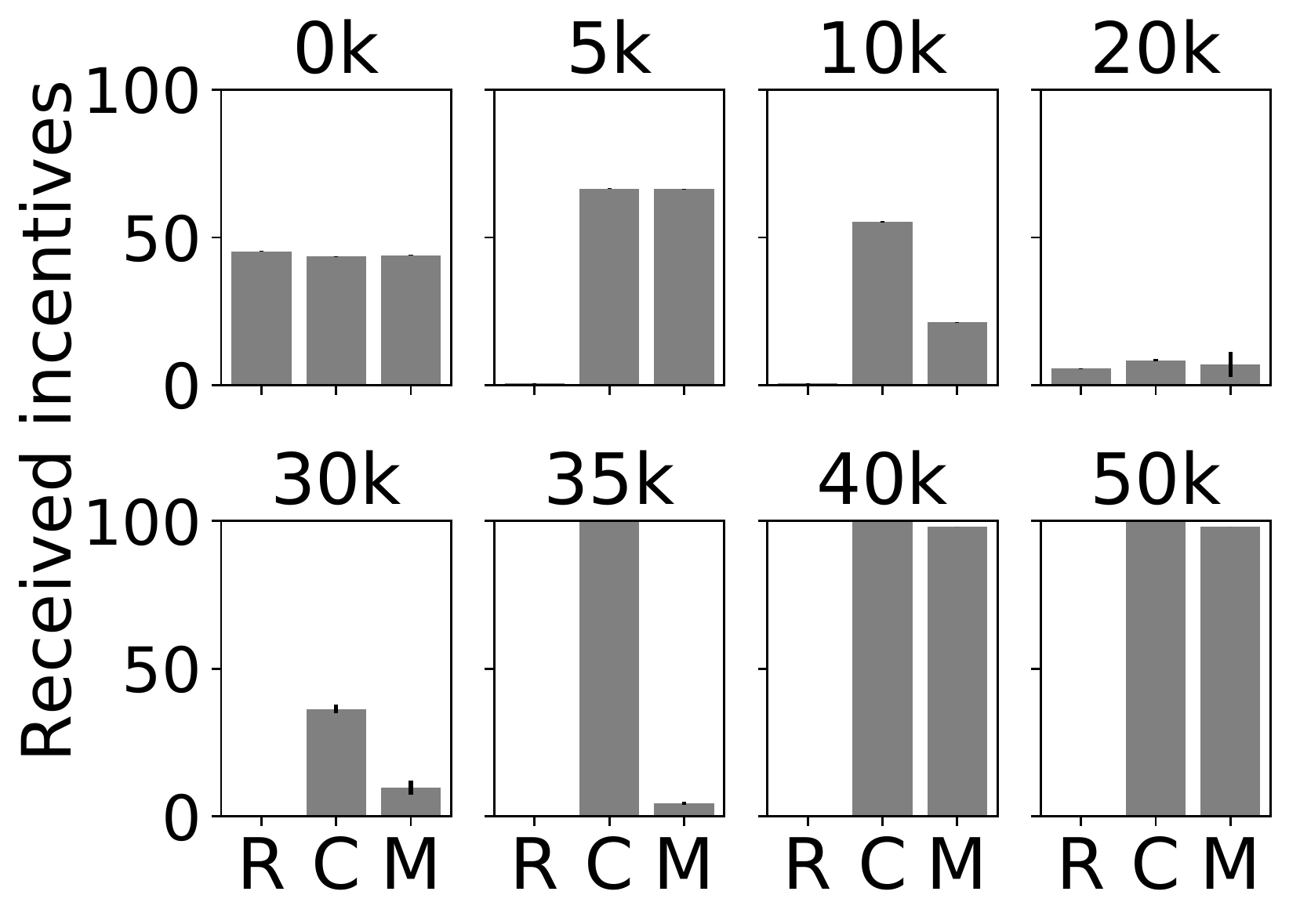}
    \caption{Scripted agent incentives}
    \label{fig:cleanup-small-measure}
\end{subfigure}
\hfill
\begin{subfigure}[t]{0.24\linewidth}
    \centering
    \includegraphics[width=1.0\linewidth]{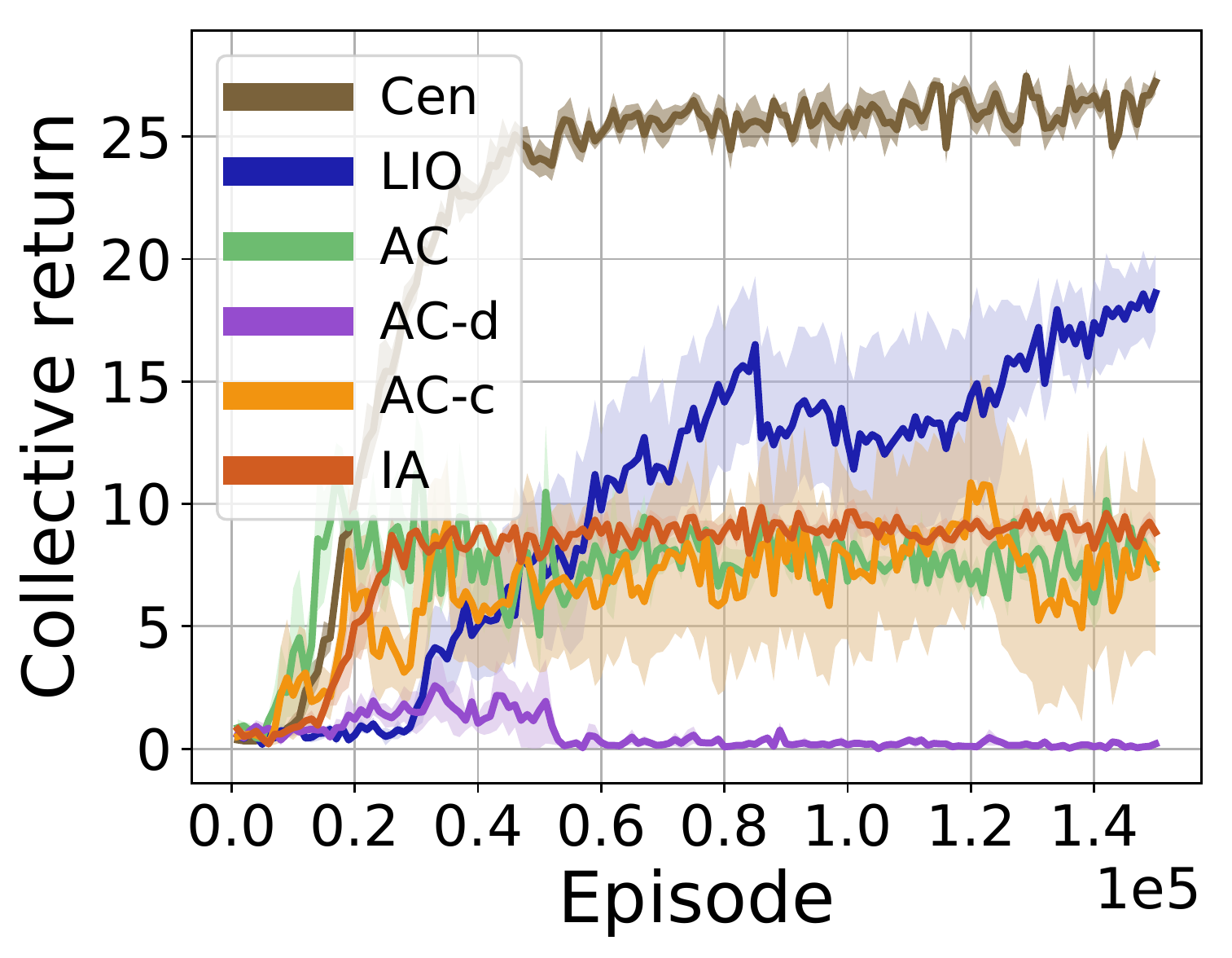}
    \caption{Cleanup 10x10}
    \label{fig:cleanup-10x10-reward}
\end{subfigure}
\hfill
\begin{subfigure}[t]{0.25\linewidth}
    \centering
    \includegraphics[width=1.0\linewidth]{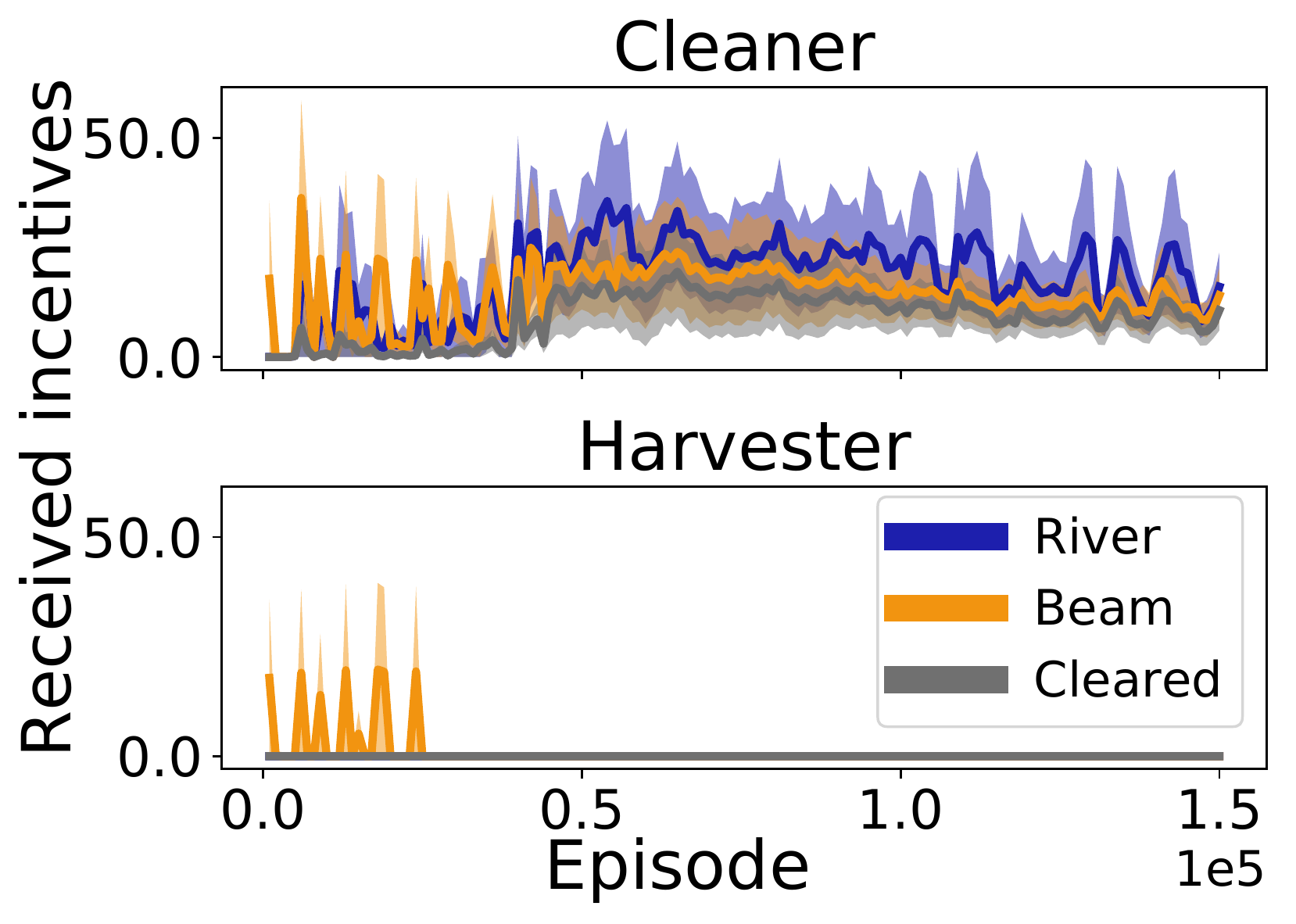}
    \caption{Incentives in training}
    \label{fig:cleanup-10x10-received}
\end{subfigure}
\vspace{-1mm}
\caption{Results on Cleanup. (a,c) Emergent division of labor between LIO agents enables higher performance than AC and IA baselines, which find rewards but exhibit competitive behavior.
(b) Behavior of incentive function in 7x7 Cleanup at different training checkpoints, measured against three scripted opponents: R moves within river without cleaning; C successfully cleans waste; M fires the cleaning beam but misses waste (mean and standard error of 20 evaluation episodes). 
(d) 10x10 map: the LIO agent who becomes a ``Cleaner'' receives incentives, while the ``Harvester'' does not.}
\label{fig:cleanup}
\vspace{-5mm}
\end{figure*}



\textbf{Cleanup.}
\Cref{fig:cleanup-small-reward,fig:cleanup-10x10-reward} show that LIO agents collected significantly more extrinsic rewards than AC and IA baselines in Cleanup,
and approach the upper bound on performance as indicated by Cen,
on both a 7x7 map and a 10x10 map with more challenging depletion threshold and lower apple respawn rates.
LIO agents discovered a division of labor (\Cref{fig:cleanup-small-lio-behavior}), whereby one agent specializes to cleaning waste at the river while the other agent, who collects almost all of the apples, provides incentives to the former.
In contrast, AC baselines learned clean but subsequently compete to collect apples, which is suboptimal for the group (\Cref{fig:cleanup-small-ac-behavior}).
Due to continual exploration by all agents, an agent may change its behavior if it receives incentives for ``wrong actions'': e.g., near episode 30k in \Cref{fig:cleanup-small-reward}, an agent temporarily stopped cleaning the river despite having consistently done so earlier.

We can further understand the progression of LIO's incentive function during training as follows.
First, we classify LIO agents \textit{at the end of training} as a ``Cleaner'' or a ``Harvester'', based on whether it primarily cleans waste or collects apples, respectively.
Next, we define three hand-scripted agents:
an R agent moves in the river but does not clean, a C agent successfully cleans waste, and an M agent fires the cleaning beam but misses waste.
\Cref{fig:cleanup-small-measure} shows the incentives given by a Harvester to these scripted agents when they are tested together periodically during training.
At episodes 10k, 30k and 35k, it gave significantly more incentives to C than to M, meaning that it distinguished between successful and unsuccessful cleaning, which explains how its actual partner in training was incentivized to become a Cleaner.
After 40k episodes, it gives nonzero reward for ``fire cleaning beam but miss'', likely because its actual training partner already converged to successful cleaning (\Cref{fig:cleanup-small-reward}), so it may have “forgotten” the difference between successful and unsuccessful usage of the cleaning beam.
As shown by results in the Escape Room (\Cref{fig:er_n2_received,fig:er_n3_received}), correct incentivization can be maintained if agents have a sufficiently large lower bound on exploration rates that pose the risk of deviating from cooperative behavior.
\Cref{fig:cleanup-10x10-received} shows the actual incentives received by Cleaner and Harvester agents when they are positioned in the river, fire the cleaning beam, or successfully clear waste during training.
We see that asymptotically, only Harvesters provide incentives to Cleaners and not the other way around.

\section{Conclusion and future directions}
\label{sec:conclusion}
We created Learning to Incentivize Others (LIO), an agent who learns to give rewards directly to other RL agents.
LIO learns an incentive function by explicitly accounting for the impact of incentives on its own extrinsic objective, through the learning updates of reward recipients.
In the Iterated Prisoner's Dilemma, an illustrative \textit{Escape Room} game, and a benchmark social dilemma problem called \textit{Cleanup}, LIO correctly incentivizes other agents to overcome extrinsic penalties so as to discover cooperative behaviors, such as division of labor, and achieve near-optimum collective performance.
We further demonstrated the feasibility of a fully-decentralized implementation of LIO.
%

Our approach to the goal of ensuring cooperation in a decentralized multi-agent population poses many open questions.
1) How should one analyze the simultaneous processes of learning incentive functions, which continuously modifies the set of equilibria, and learning policies with these changing rewards? 
While previous work have treated the convergence of gradient-based learning in differentiable games with fixed rewards \citep{balduzzi2018mechanics,letcher2019stable}, the theoretical analysis of learning processes that dynamically change the reward structure of a game deserves more attention.
2) How can an agent account for the cost of incentives in an adaptive way? 
An improvement to LIO would be a handcrafted or learned mechanism that prevents the cost from driving the incentive function to zero before the effect of incentives on other agents' learning is measurable.
3) How should agents better account for the longer-term effect of incentives?
One possibility is to differentiate through a sequence of gradient descent updates by recipients, during which the incentive function is fixed.
4) Should social factors modulate the effect of incentives in an agent population?
LIO assumes that recipients cannot reject an incentive, but a more intelligent agent may selectively accept a subset of incentives based on its appraisal of the other agents' behavior.
We hope our work sparks further interest in this research endeavor.




\section*{Broader Impact}

Our work is a step toward the goal of ensuring the common good in a potential future where independent reinforcement learning agents interact with one another and/or with humans in the real world.
We have shown that cooperation can emerge by introducing an additional learned incentive function that enables one agent to affect another agent's reward directly.
However, as agents still independently maximize their own individual rewards, it is open as to how to prevent an agent from misusing the incentive function to exploit others.
One approach for future research to address this concern is to establish new connections between our work and the emerging literature on reward tampering \citep{everitt2019reward}.
By sparking a discussion on this important aspect of multi-agent interaction, we believe our work has a positive impact on the long-term research endeavor that is necessary for RL agents to be deployed safely in real-world applications.



\section*{Acknowledgements}

We thank Thomas Anthony, Jan Balaguer, and Thore Graepel at DeepMind for insightful discussions and feedback.
JY was funded in part by NSF III-1717916.







\bibliographystyle{natbib}
\bibliography{citation}

\newpage
\appendix

\section{Further discussion}

\subsection{Cost for incentivization}
\label{app:cost}

We justify the way in which LIO accounts for the cost of incentivization as follows.
Recall that this cost is incurred in the objective for LIO's incentive function (see \eqref{eq:reward-objective} and \eqref{eq:reward-regularizer}), instead of being accounted in the total reward \eqref{eq:recipient-reward} that is maximized by LIO's policy.
Fundamentally, the reason is that the cost should be incurred only by the part of the agent that is directly responsible for incentivization.
In LIO, the policy and incentive function are separate modules: while the former takes regular actions to maximize \textit{external} rewards, only the latter produces incentives that directly and actively shape the behavior of other agents.
The policy is decoupled from incentivization, and it would be incorrect to penalize it for the behavior of the incentive function.
Instead, we need to attribute the cost directly to the incentive function parameters via \eqref{eq:reward-regularizer}.
From a more intuitive perspective, LIO is constructed with the knowledge that it can perform two fundamentally different behaviors---1) take regular actions that affect the Markov game transition, and 2) give incentives to shape other agents' learning---and it knows not to penalize the former behavior with the latter behavior.
In contrast, if one were to augment conventional RL with reward-giving actions (as we do for baselines in \Cref{subsec:implementation}), then the cost for incentivization should indeed be accounted by the policy.
One may consider other mechanisms for cost, such as budget constraints \citep{lupu2020gifting}.

In our experiments, we find the coefficient $\alpha$ in the cost for incentivization is a sensitive parameter.
At the beginning of training, \eqref{eq:reward-regularizer} immediately drives the magnitude of incentives to zero.
However, both the reward-giver and recipients require sufficient time to learn the effect of incentives, which means that too large an $\alpha$ would lead to the degenerate result of $r_{\eta^i} = \mathbf{0}$.
On the other extreme, $\alpha=0$ means there is no penalty and may result in profligate incentivization that serves no useful purpose.
While we found that values of $10^{-3}$ and $10^{-4}$ worked well in our experiments, one may consider adaptive and dynamic computation of $\alpha$ for more efficient training.

\section{Analysis in Iterated Prisoner's Dilemma}
\label{app:analysis}

\propIPD*

\begin{proof}
We prove this by deriving closed-form expressions for the updates to parameters of policies and incentive functions.
These updates are also used to compute the vector fields shown in \Cref{fig:vf-t2-e1}.
Let $\theta^i$ for $i \in \lbrace 1, 2 \rbrace$ denote each agent's probability of taking the cooperative action.
Let $\eta^1 \defeq [\eta^1_C, \eta^1_D] \in \Rbb^2$ denote Agent 1's incentive function, where the values are given to Agent 2 when it takes action $a^2 = C$ or $a^2 = D$.
Similarly, let $\eta^2$ denote Agent 2's incentive function.
The value function for each agent is defined by
\begin{equation}
\begin{aligned}
    V^i(\theta^1, \theta^2) &= \sum_{t=0}^{\infty} \gamma^t p^T r^i = \frac{1}{1 - \gamma}p^T r^i \, ,\\
    \text{where } \quad p &= \left[ \theta^1 \theta^2, \theta^1 (1 - \theta^2), (1-\theta^1)\theta^2, (1-\theta^1)(1-\theta^2) \right] \, .
\end{aligned}    
\end{equation}
The total reward received by each agent is
\begin{align}
    r^1 &= \left[ -1 + \eta^2_C, -3 + \eta^2_C, 0 + \eta^2_D, -2 + \eta^2_D \right] \, ,\\
    r^2 &= \left[ -1 + \eta^1_C, 0 + \eta^1_D, -3 + \eta^1_C, -2 + \eta^1_D \right] \, .
\end{align}
Agent 2 updates its policy via the update
\begin{equation}
\begin{aligned}\label{eq:ipd-theta2-update}
    \hat{\theta}^2 &= \theta^2 + \alpha \nabla_{\theta^2} V^2(\theta^1, \theta^2) \\
    &= \theta^2 + \frac{\alpha}{1-\gamma} \nabla_{\theta^2} \left( \theta^1\theta^2(-1 + \eta^1_C) + \theta^1(1-\theta^2)\eta^1_D \right. \\
    &\quad + \left. (1-\theta^1)\theta^2 (-3+\eta^1_C) + (1-\theta^1)(1-\theta^2)(-2+\eta^1_D) \right) \\
    &= \theta^2 + \frac{\alpha}{1-\gamma} \left( \eta^1_C - \eta^1_D - 1 \right) \, ,
\end{aligned}    
\end{equation}
and likewise for Agent 1:
\begin{align}\label{eq:ipd-theta1-update}
    \hat{\theta}^1 &= \theta^1 + \frac{\alpha}{1-\gamma} \left( \eta^2_C - \eta^2_D - 1 \right) \, .
\end{align}
Let $\hat{p}$ denote the joint action probability under updated policies $\hat{\theta}^1$ and $\hat{\theta}^2$, and let $\Delta^2 \defeq (\eta^1_C - \eta^1_D - 1)\alpha / (1-\gamma)$ denote Agent 2's policy update. 
Agent 1 updates its incentive function parameters via
\begin{equation}
\begin{aligned}
    \eta^1 &\leftarrow \eta^1 + \beta \nabla_{\eta^1} \frac{1}{1 - \gamma} \hat{p}^T r^1 \\
    &= \eta^1 + \frac{\beta}{1-\gamma} \nabla_{\eta^1} \left[ \hatt^1(\theta^2+\Delta^2)(-1+\eta^2_C) + \hatt^1(1 - \theta^2-\Delta^2)(-3 + \eta^2_C) \right. \\
    &\quad \left. + (1-\hatt^1)(\theta^2+\Delta^2)\eta^2_D + (1-\hatt^1)(1-\theta^2-\Delta^2)(-2 + \eta^2_D) \right] \\
    &= \eta^1 + \frac{\beta \alpha}{(1-\gamma)^2} B_2 \begin{bmatrix}1 \\ -1\end{bmatrix} \, , \label{eq:ipd-eta1-update}
\end{aligned}
\end{equation}
where the scalar $B_2$ is
\begin{align}
    B_2 &= \hat{\theta}^1 (-1 + \eta^2_C) - \hat{\theta}^1 (-3 + \eta^2_C) + (1 - \hat{\theta}^1) \eta^2_D - (1 - \hat{\theta}^1)(-2 + \eta^2_D) = 2 \, .
\end{align}
By symmetry, with $B_1 = 2$, Agent 2 updates its incentive function via
\begin{align}\label{eq:ipd-eta2-update}
    \eta^2 \leftarrow \eta^2 + \frac{\beta\alpha}{(1-\gamma)^2} B_1 \begin{bmatrix}1 \\ -1\end{bmatrix} \, .
\end{align}
Note that each $\eta^i$ is updated so that $\eta^i_C$ increases while $\eta^i_D$ decreases.
Referring to \eqref{eq:ipd-theta2-update} and \eqref{eq:ipd-theta1-update}, one sees that the updates to incentive parameters lead to updates to policy parameters that increase the probability of mutual cooperation.
This is consistent with the viewpoint of modifying the Nash Equilibrium of the payoff matrices.
With incentives, the players have payoff matrices in \Cref{tab:ipd-modified-payoff}.
For CC to be the global Nash Equilibrium, such that cooperation is preferred by an agent $i$ regardless of the other agent's action, incentives must satisfy $\eta^i_C - \eta^i_D - 1 > 0$.
This is guaranteed to occur by incentive updates \eqref{eq:ipd-eta1-update} and \eqref{eq:ipd-eta2-update}.
\end{proof}

\begin{table}[ht]
    \centering
    \caption{Payoff matrices for row player (left) and column player (right) with incentives.}
    \begin{tabular}{c|cc}
        A1 & C & D \\
         \hline
        C & -1 + $\eta^2_C$ & -3 + $\eta^2_C$ \\
        D & 0 + $\eta^2_D$ & -2 + $\eta^2_D$
    \end{tabular}
    \hfil
    \begin{tabular}{c|cc}
        A2 & C & D \\
         \hline
        C & -1 + $\eta^1_C$ & 0 + $\eta^1_D$ \\
        D & -3 + $\eta^1_C$ & -2 + $\eta^1_D$
    \end{tabular}
    \label{tab:ipd-modified-payoff}
\end{table}

\begin{figure}[t]
\centering
\begin{subfigure}[t]{0.24\linewidth}
    \centering
    \includegraphics[width=1.0\linewidth]{vectorfields/vf-theta2-eta1c-eta1d0p0.png}
\end{subfigure}
\hfill
\begin{subfigure}[t]{0.24\linewidth}
    \centering
    \includegraphics[width=1.0\linewidth]{vectorfields/vf-theta2-eta1c-eta1d1p0.png}
\end{subfigure}
\hfill
\begin{subfigure}[t]{0.24\linewidth}
    \centering
    \includegraphics[width=1.0\linewidth]{vectorfields/vf-theta2-eta1c-eta1d2p0.png}
\end{subfigure}
\hfill
\begin{subfigure}[t]{0.24\linewidth}
    \centering
    \includegraphics[width=1.0\linewidth]{vectorfields/vf-theta2-eta1c-eta1d3p0.png}
\end{subfigure}

\begin{subfigure}[t]{0.24\linewidth}
    \centering
    \includegraphics[width=1.0\linewidth]{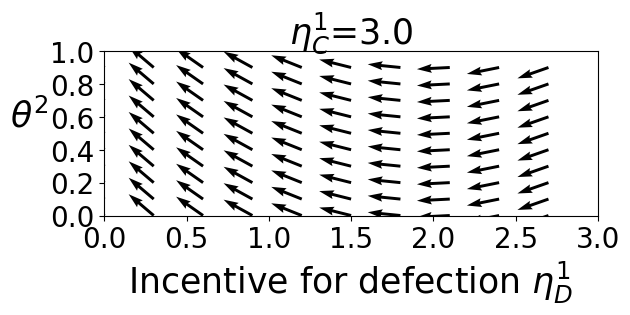}
    \label{fig:vf-t2-e1d-1}
\end{subfigure}
\hfill
\begin{subfigure}[t]{0.24\linewidth}
    \centering
    \includegraphics[width=1.0\linewidth]{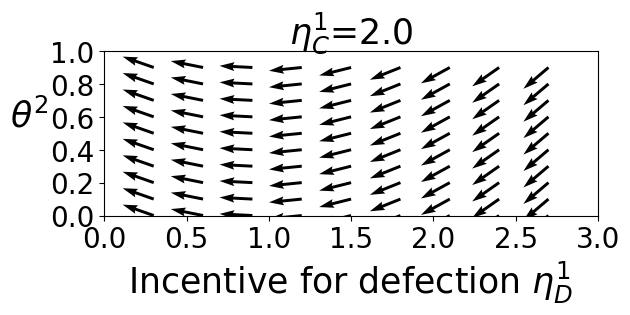}
    \label{fig:vf-t2-e1d-2}
\end{subfigure}
\hfill
\begin{subfigure}[t]{0.24\linewidth}
    \centering
    \includegraphics[width=1.0\linewidth]{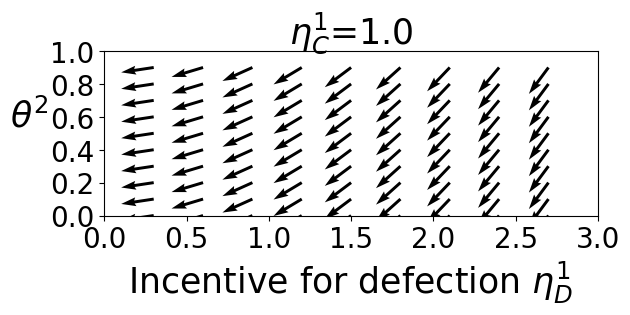}
    \label{fig:vf-t2-e1d-3}
\end{subfigure}
\hfill
\begin{subfigure}[t]{0.24\linewidth}
    \centering
    \includegraphics[width=1.0\linewidth]{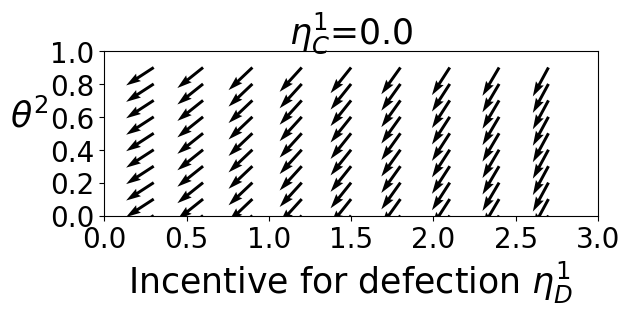}
    \label{fig:vf-t2-e1d-4}
\end{subfigure}
\caption{Vector fields showing the probability of recipient cooperation versus incentive value given for cooperation (top row) and defection (lower row). Each plot has a fixed value for the incentive given for the other action.}
\label{fig:app-vf-t2-e1}
\end{figure}

\section{Derivations}
\label{app:derivation}

The factor $\nabla_{\hat{\theta}^j} J^i(\hat{\tau}^i, \hat{\thetabf})$ \eqref{eq:chain-rule-factor} in the incentive function's gradient \eqref{eq:chain-rule} is derived as follows.
For brevity, we will drop the ``hat'' notation---recall that it indicates a quantity belongs to a new trajectory after a regular policy update---as all quantities here have ``hats''.
Let $\nabla_j$ denote $\nabla_{\hat{\theta}^j}$ and $\pi$ denote $\pi(a_t|s_t)$.
Let $V^{i,\pibf}(s)$ and $Q^{i,\pibf}(s,\abf)$ denote the global value and action-value function for agent $i$'s reward under joint policy $\pibf$. 
Then the gradient of agent $i$'s expected extrinsic return with respect to agent $j$'s policy parameter can be derived in a similar manner as standard policy gradients \citep{sutton2000policy}:
\begin{align*}
    &\nabla_j J^i(\tau,\thetabf) = \nabla_j V^{i,\pibf}(s_0) = \nabla_j \sum_{\abf} \pibf(\abf|s_0) Q^{i,\pibf}(s_0,\abf) \\
    &= \sum_{\abf} \pi^{-j} \left( (\nabla_j \pi^j) Q^{i,\pibf}(s_0,\abf) + \pi^j \nabla_j Q^{i,\pibf}(s_0,\abf) \right)\\
    &= \sum_{\abf} \pi^{-j} \left( (\nabla_j \pi^j)Q^{i,\pibf} + \pi^j \nabla_j \left( r^i + \gamma \sum_{s'} P(s'|s_0,\abf) V^{i,\pibf}(s') \right) \right) \\
    &= \sum_{\abf} \pi^{-j} \left( (\nabla_j \pi^j)Q^{i,\pibf} + \gamma \pi^j \sum_{s'} P(s'|s_0,\abf) \nabla_j V^{i, \pibf}(s') \right) \\
    &= \sum_{x} \sum_{k=0}^{\infty} P(s_0\rightarrow x,k, \pibf) \gamma^k \sum_{\abf} \pi^{-j} \nabla_j \pi^j Q^{i,\pibf}(x,\abf) \\
    &= \sum_s d^{\pibf}(s) \sum_{\abf} \pi^{-j} \nabla_j \pi^j Q^{i,\pibf}(s,\abf) \\
    &= \sum_s d^{\pibf}(s) \sum_{\abf} \pi^{-j} \pi^j \nabla_j \log \pi^j Q^{i,\pibf}(s,\abf) \\
    &= \Ebb_{\pibf} \left[ \nabla_j \log \pi^j(a^j|s) Q^{i,\pibf}(s,\abf) \right]
\end{align*}

Alternatively, one may rely on automatic differentiation in modern machine learning frameworks \citep{abadi2016tensorflow} to compute the chain rule \eqref{eq:chain-rule} via direct minimization of the loss \eqref{eq:eta-loss}.
This is derived as follows.
Let the notation $\neq j, i$ denote all indices except $j$ and $i$.
Note that agent $i$'s updated policy $\hat{\pi}^i$ is not a function of $\eta^i$, as it does not receive incentives from itself.
Recall that a recipient $j$'s updated policy $\hat{\pi}^j$ has explicit dependence on a reward-giver $i$'s incentive parameters $\eta^i$.
Also note that 
\begin{align*}
    \nabla_{\eta^i} \hat{\pi}^{-i} = \sum_{j\neq i} (\nabla_{\eta^i} \hat{\pi}^j) \hat{\pi}^{\neq j,i}
\end{align*}
by the product rule.
Then we have:
\begin{align*}
    & \nabla_{\eta^i} J^i(\hat{\tau}^i, \hat{\thetabf}) = \nabla_{\eta^i} V^{i,\hat{\pibf}}(\hat{s}_0) = \nabla_{\eta^i} \sum_{\hat{\abf}} \hat{\pi}^i(\hat{a}^i|\hat{s}_0)\hat{\pi}^{-i}(\hat{a}^{-i}|\hat{s}_0) Q^{i, \hat{\pibf}}(\hat{s}_0,\hat{\abf}) \\
    &= \sum_{\hat{\abf}} \hat{\pi}^i \left( \sum_{j\neq i} (\nabla_{\eta^i} \hat{\pi}^j) \hat{\pi}^{\neq j, i} Q^{i, \hat{\pibf}} + \hat{\pi}^{-i} \nabla_{\eta^i} Q^{i, \hat{\pibf}} \right) \quad \text{(by the remarks above)} \\
    &= \sum_{\hat{\abf}} \hat{\pi}^i \left( \sum_{j\neq i} (\nabla_{\eta^i} \hat{\pi}^j) \hat{\pi}^{\neq j, i} Q^{i, \hat{\pibf}} + \gamma \hat{\pi}^{-i} \sum_{s'} P(s'|\hat{s}_0,\hat{\abf}) \nabla_{\eta^i} V^{i,\hat{\pibf}}(s') \right) \\
    &= \sum_{x} \sum_{k=0}^{\infty} P(s_0\rightarrow x, k, \hat{\pibf}) \gamma^k \sum_{\hat{\abf}} \hat{\pi}^i \sum_{j \neq i} (\nabla_{\eta^i} \hat{\pi}^j) \hat{\pi}^{\neq j,i} Q^{i, \hat{\pibf}} \\
    &= \sum_{\hat{s}} d^{\hat{\pibf}}(\hat{s}) \sum_{\hat{\abf}} \hat{\pi}^i \sum_{j \neq i} \hat{\pi}^j (\nabla_{\eta^i} \log \hat{\pi}^j) \hat{\pi}^{\neq j,i} Q^{i, \hat{\pibf}} \\
    &= \sum_{\hat{s}} d^{\hat{\pibf}}(\hat{s}) \sum_{\hat{\abf}} \hat{\pi}^i \sum_{j \neq i} (\nabla_{\eta^i} \log \hat{\pi}^j) \hat{\pi}^{-i} Q^{i, \hat{\pibf}} \\
    &= \sum_{\hat{s}} d^{\hat{\pibf}}(\hat{s}) \sum_{\hat{\abf}} \hat{\pibf} \sum_{j \neq i} (\nabla_{\eta^i} \log \hat{\pi}^j) Q^{i, \hat{\pibf}} = \Ebb_{\hat{\pibf}} \left[ \sum_{j \neq i} (\nabla_{\eta^i} \log \hat{\pi}^j) Q^{i, \hat{\pibf}} \right]
\end{align*}
Hence descending a stochastic estimate of this gradient is equivalent to minimizing the loss in \eqref{eq:eta-loss}.

\section{Experiments}

\subsection{Environment details}

This section provides more details on each experimental setup.

\textbf{IPD.}
We used the same definition of observation, action, and rewards as \citet{foerster2018learning}.
Each environment step is one round of the matrix game.
Each agent observes the joint action taken by both agents at the previous step, along with an indicator for the first round of each episode.
We trained for 60k episodes, each with 5 environments steps, which gives the same total number of environment steps used by LOLA \citep{foerster2018learning}.

\textbf{Escape Room.}
Each agent observes all agents' positions and can move among the three available states: lever, start, and door.
At every time step, all agents commit to and disclose their chosen actions, compute the incentives based on their observations of state and others' actions (only for LIO and augmented baselines that allow incentivization), and receive the sum of extrinsic rewards and incentives (if any).
LIO and augmented baselines also observe the cumulative incentives given to the other agents within the current episode.
An agent's individual reward is zero for staying at the current state, -1 for movement away from its current state if fewer than $M$ agents move to (or are currently at) the lever, and +10 for moving to (or staying at) the door if $\geq M$ agents pull the lever.
Each episode terminates when an agent successfully exits the door, or when 5 time steps elapse.

\textbf{Cleanup.}
We built on a version of an open-source implementation \citep{vinitsky2020}.
The environment settings for 7x7 and 10x10 maps are given in \Cref{tab:cleanup-settings}.
To focus on the core aspects of the common-pool resource problem, we removed rotation actions, set the orientation of all agents to face ``up'', and disabled their ``tagging beam'' (which, if used, would remove a tagged agent from the environment for a number of steps).
These changes mean that an agent must move to the river side of the map to clear waste successfully, as it cannot simply stay in the apple patch and fire its cleaning beam toward the river.
Acting cooperatively as such would allow other agents to collect apples, and hence our setup increases the difficulty of the social dilemma.
Each agent receives an egocentric normalized RGB image observation that spans a sufficiently large area such that the entire map is observable by that agent regardless of its position.
The cleaning beam has length 5 and width 3.
For LIO and the AC-c baseline, which have a separate module that observes other agents' actions and outputs real-valued incentives, we let that module observe a multi-hot vector that indicates which agent(s) used their cleaning beam.

\begin{table}[h]
    \centering
    \caption{Environment settings in Cleanup}
    \label{tab:cleanup-settings}
    \begin{tabular}{lrr}
        \toprule
        Parameter & 7x7 & 10x10 \\
        \midrule
        appleRespawnProbability & 0.5 & 0.3 \\
        thresholdDepletion & 0.6 & 0.4 \\ 
        thresholdRestoration & 0.0 & 0.0 \\
        wasteSpawnProbability & 0.5 & 0.5 \\
        view\_size & 4 & 7 \\
        max\_steps & 50 & 50 \\
        \bottomrule
    \end{tabular}
\end{table}

\subsection{Implementation}
\label{app:implementation}

This subsection provides more details on implementation of all algorithms used in experiments.
We use fully-connected neural networks for function approximation in the IPD and ER, and convolutional networks to process image observations in Cleanup.
The policy network has a softmax output for discrete actions in all environments.
Within each environment, all algorithms use the same neural architecture unless stated otherwise.
We applied the open-source implementation of LOLA \citep{foerster2018learning} to ER.
We use an exploration lower bound $\epsilon$ that maps the learned policy $\pi$ to a behavioral policy $\tilde{\pi}(a|s) = (1 - \epsilon) \pi(a|s) + \epsilon / |\Acal|$, with $\epsilon$ decaying linearly from $\epsilon_{\text{start}}$ to $\epsilon_{\text{end}}$ by $\epsilon_{\text{div}}$ episodes.
We use discount factor $\gamma = 0.99$.
We use gradient descent for policy optimization, the Adam optimizer \citep{kingma2014adam} for training value functions (in Cleanup), and Adam optimizer for LIO's incentive function.

The augmented policy gradient and actor-critic baselines, labeled as PG-c and AC-c, which have continuous ``give-reward'' actions in addition to regular discrete actions, are trained as follows.
These baselines have an augmented action space $\Acal \times \Rbb^{N-1}$ and learns a factorized policy $\pi(a_d,a_r|o) \defeq \pi(a_d|o) \pi(a_r|o)$, where $a_d \in \Acal$ is a regular discrete action and $a_r \in \Rbb^{N-1}$ is the reward given to the other $N-1$ agents.
The factor $\pi(a_d|o)$ is a standard categorical distribution conditioned on observation.
The factor $\pi(a_r|o)$ is defined via an element-wise sigmoid $\sigma(\cdot)$ applied to samples from a multivariate diagonal Gaussian, so that $\pi(a_r|o)$ is bounded.
Specifically, we let $u \sim \Ncal(f_{\eta}(o), \mathbf{1})$, where $f_{\eta}(o) \colon \Ocal \mapsto \Rbb^{N-1}$ is a neural network with parameters $\eta$, and let $a_r = R_{\text{max}} \sigma(u)$.
By the change of variables formula, $\pi(a_r|o)$ has density $\pi(a_r|o) = \Ncal(\mu_{\eta},\mathbf{1}) \prod_{i=1}^{N-1} (\mathrm{d}a_r[i]/\mathrm{d}u[i])^{-1}$, which can be used to compute the log-likelihood of $\pi(a_d,a_r|o)$ in the policy gradient.

Let $\beta$ denote the coefficient for entropy of the policy, $\alpha_{\theta}$ the policy learning rate, $\alpha_{\eta}$ the incentive learning rate, $\alpha_{\phi}$ the critic learning rate, and $R_a$ the value of the discrete ``give-reward'' action.

\textbf{IPD.}
The policy network and the incentive function in LIO have two hidden layers of size 16 and 8.

\begin{table}[h]
    \centering
    \caption{Hyperparameters in IPD.}
    \label{tab:hyperparam-ipd}
    \begin{tabular}{lrlr}
        \toprule
        Parameter & Value & Parameter & Value \\
        \midrule
        $\beta$ & 0.1 & $\alpha_{\theta}$ & 1e-3 \\
        $\epsilon_{\text{start}}$ & 1.0 & $\alpha_{\eta}$ & 1e-3 \\
        $\epsilon_{\text{end}}$ & 0.01 & $\alpha$ & 0 \\
        $\epsilon_{\text{div}}$ & 5000 & $R_{\text{max}}$ & 3.0 \\
        \bottomrule
    \end{tabular}
\end{table}

\textbf{ER.}
The policy network has two hidden layers of size 64 and 32.
LIO's incentive function has two hidden layers of size 64 and 16.
We use a separate Adam optimizer for the cost part of the incentive function's objective \eqref{eq:reward-objective}, with learning rate 1e-4, with $\alpha_{\eta} =$ 1e-3, and set $\alpha=1.0$.
Exploration and learning rate hyperparameters were tuned for each algorithm via coordinate ascent, searching through $\epsilon_{\text{start}}$ in [0.5, 1.0], $\epsilon_{\text{end}}$ in [0.05, 0.1, 0.3], $\epsilon_{\text{div}}$ in [100, 1000], $\beta$ in [0.01, 0.1],  $\alpha_{\theta}$, $\alpha_{\eta}$, and $\alpha_{\text{cost}}$ in [1e-3, 1e-4].
LOLA performed best with learning rate 0.1 and $R_a=2.0$, but it did not benefit from additional exploration.
LIO and PG-c have $R_{\text{max}} = 2.0$.
PG-d used $R_a = 2.0$.

\begin{table}[h]
    \centering
    \caption{Hyperparameters in Escape Room.}
    \label{tab:hyperparam-er}
    \begin{tabular}{lrrrrrrrr}
        \toprule
        \multicolumn{4}{r}{$N=2$} & \multicolumn{5}{c}{$N=3$}\\
        \cmidrule(r){2-5}
        \cmidrule(r){6-9}
        Parameter & LIO & PG & PG-d & PG-c & LIO & PG & PG-d & PG-c \\
        \midrule
        $\beta$ & 0.01 & 0.01 & 0.01 & 0.1 & 0.01 & 0.01 & 0.01 & 0.1 \\
        $\epsilon_{\text{start}}$ & 0.5 & 0.5 & 0.5 & 1.0 & 0.5 & 0.5 & 0.5 & 1.0 \\
        $\epsilon_{\text{end}}$ & 0.1 & 0.05 & 0.05 & 0.1 & 0.3 & 0.05 & 0.05 & 0.1 \\
        $\epsilon_{\text{div}}$ & 1e3 & 1e2 & 1e2 & 1e3 & 1e3 & 1e2 & 1e2 & 1e3 \\
        $\alpha_{\theta}$ & 1e-4 & 1e-4 & 1e-4 & 1e-3 & 1e-4 & 1e-4 & 1e-4 & 1e-3\\
        \bottomrule
    \end{tabular}
\end{table}


\textbf{Cleanup.}
All algorithms are based on actor-critic for policy optimization, whereby each agent $j$'s policy parameter $\theta^j$ is updated via
\begin{align}\label{eq:actor-critic}
    \hat{\theta}^j \leftarrow \theta^j + \Ebb_{\pibf} \left[ \nabla_{\theta^j} \log \pi_{\theta^j}(a^j|o^j) \left( r^j + \gamma V_{\phi^j}(s') - V_{\tilde{\phi}^j}(s) \right) \right] \, ,
\end{align}
and the critic parameter $\phi^j$ is updated by minimizing the temporal difference loss
\begin{align}
    L(\phi^j) &= \Ebb_{s,s' \sim \pibf} \Bigl[ \bigl(r^j + \gamma V_{\tilde{\phi}^j}(s') - V_{\phi^j}(s) \bigr)^2 \Bigr]
\end{align}
The target network \citep{mnih2015human} parameters $\tilde{\phi}^j$ are updated via $\tilde{\phi}^j \leftarrow \tau \phi^j + (1 -\tau) \tilde{\phi}^j$ with $\tau = 0.01$.

The policy and value networks have an input convolutional layer with 6 filters of size [3, 3], stride [1, 1], and ReLU activation.
The output of convolution is flattened and passed through two fully-connected (FC) hidden layers both of size 64.
The policy output is a softmax over discrete actions; the value network has a linear scalar output.
LIO's incentive function uses the same input convolutional layer, except that its output is passed through the first FC layer, concatenated with its observation of other agents' actions, then passed through the second FC layer and finally to a linear output layer.
Inequity Aversion agents \citep{hughes2018inequity} have an additional 1D vector observation of all agents' temporally smoothed rewards---this is concatenated with the output of the first FC hidden layer and sent to the second FC layer.
Entropy coefficient was held at 0.1 for all methods.

LIO and AC-c have $R_{\text{max}} = 2.0$.
AC-d used $R_a = 2.0$.
Inequity aversion agents have disadvantageous aversion coefficient value 0, advantageous aversion coefficient value 0.05, and temporal smoothing parameter $\lambda = 0.95$.
We use critic learning rate $\alpha_{\phi}=10^{-3}$ for all methods.
LIO used $\alpha_{\eta}=$1e-3 and cost coefficient $\alpha=10^{-4}$.
Exploration and learning rate hyperparameters were tuned for each algorithm via coordinate ascent, searching through $\epsilon_{\text{start}}$ in [0.5, 1.0], $\epsilon_{\text{end}}$ in [0.05, 0.1], $\epsilon_{\text{div}}$ in [100, 1000, 5000], $\alpha_{\theta}$, $\alpha_{\eta}$, and $\alpha_{\text{cost}}$ in [1e-3, 1e-4].

\begin{table}[h]
    \centering
    \caption{Hyperparameters in Cleanup.}
    \label{tab:hyperparam-cleanup}
    \begin{tabular}{lrrrrrrrrrr}
        \toprule
        \multicolumn{4}{r}{7x7} & \multicolumn{5}{r}{10x10}\\
        \cmidrule(r){2-6}
        \cmidrule(r){7-11}
        Parameter & LIO & AC & AC-d & AC-c & IA & LIO & AC & AC-d & AC-c & IA \\
        \midrule
        $\epsilon_{\text{start}}$ & 0.5 & 0.5 & 0.5 & 0.5 & 0.5 & 0.5 & 0.5 & 0.5 & 0.5 & 0.5 \\
        $\epsilon_{\text{end}}$ & 0.05 & 0.05 & 0.05 & 0.05 & 0.05 & 0.05 & 0.05 & 0.05 & 0.05 & 0.05 \\
        $\epsilon_{\text{div}}$ & 100 & 100 & 100 & 100 & 1000 & 1000 & 5000 & 1000 & 1000 & 5000 \\
        $\alpha_{\theta}$ & 1e-4 & 1e-3 & 1e-4 & 1e-4 & 1e-3 & 1e-4 & 1e-3 & 1e-3 & 1e-3 & 1e-3 \\
        \bottomrule
    \end{tabular}
\end{table}


\section{Additional results}

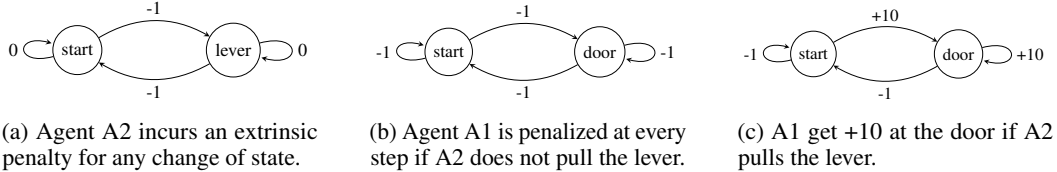
\begin{figure*}[t]
    \centering
    \begin{subfigure}[t]{0.3\linewidth}
    \centering
    \resizebox{1.0\textwidth}{!}{%
    \begin{tikzpicture}
        \node[state] at (-1,0) (s0) {start};
        \node[state, right of=s0] at (1,0) (s1) {lever};
        \draw (s0) edge[loop left] node{0} (s0)
            (s1) edge[loop right] node{0} (s1)
            (s0) edge[bend left, above] node{-1} (s1)
            (s1) edge[bend left, below] node{-1} (s0)
        ;
    \end{tikzpicture}
    }
    \caption{Agent A2 incurs an extrinsic penalty for any change of state.}
    \label{fig:a2-mdp}
    \end{subfigure}
    \hfill
    \begin{subfigure}[t]{0.3\linewidth}
    \centering
    \resizebox{1.0\textwidth}{!}{%
    \begin{tikzpicture}
        \node[state] at (-1,0) (s0) {start};
        \node[state, right of=s0] at (1,0) (s1) {door};
        \draw (s0) edge[loop left] node{-1} (s0)
            (s1) edge[loop right] node{-1} (s1)
            (s0) edge[bend left, above] node{-1} (s1)
            (s1) edge[bend left, below] node{-1} (s0)
        ;
    \end{tikzpicture}
    }
    \caption{Agent A1 is penalized at every step if A2 does not pull the lever.}
    \label{fig:a1-mdp1}
    \end{subfigure}
    \hfill
    \begin{subfigure}[t]{0.3\linewidth}
    \centering
    \resizebox{1.0\textwidth}{!}{%
    \begin{tikzpicture}
        \node[state] at (-1,0) (s0) {start};
        \node[state, right of=s0] at (1,0) (s1) {door};
        \draw (s0) edge[loop left] node{-1} (s0)
            (s1) edge[loop right] node{+10} (s1)
            (s0) edge[bend left, above] node{+10} (s1)
            (s1) edge[bend left, below] node{-1} (s0)
        ;
    \end{tikzpicture}
    }
    \caption{A1 get +10 at the door if A2 pulls the lever.}
    \label{fig:a1-mdp2}
    \end{subfigure}
    \centering
    \caption{Asymmetric \textit{Escape Room} game involving two agents, A1 and A2. (a) In the absence of incentives, A2's optimal policy is to stay at the start state and not pull the lever. (b) Hence A1 cannot exit the door and is penalized at every step. (c) A1 can receive positive reward if it learns to incentivize A2 to pull the lever. Giving incentives is not an action depicted here.}
    \label{fig:asymmetric-room}
\end{figure*}

\subsection{Asymmetric Escape Room}
\label{app:asymmetric}

We conducted additional experiments on an asymmetric version of the Escape Room game between two learning agents (A1 and A2) as shown in \Cref{fig:asymmetric-room}. 
A1 gets +10 extrinsic reward for exiting a door and ending the game (\Cref{fig:a1-mdp2}), but the door can only be opened when A2 pulls a lever; otherwise, A1 is penalized at every time step (\Cref{fig:a1-mdp1}).
The extrinsic penalty for A2 discourages it from taking the cooperative action (\Cref{fig:a2-mdp}).
The global optimum combined reward is +9, and it is impossible for A2 to get positive extrinsic reward.
Due to the asymmetry, A1 is the reward-giver and A2 is the reward recipient for methods that allow incentivization.
Each agent observes both agents' positions, and can move between the two states available to itself.
We allow A1 to observe A2's current action before choosing its own action, which is necessary for methods that learn to reward A2's cooperative actions.
We use a standard policy gradient for A2 unless otherwise specified.

In addition to the baselines described for the symmetric case---namely, policy gradient (PG-rewards) and LOLA with discrete ``give-reward'' actions---we also compare with a two-timescale method, labeled \textbf{2-TS}.
A 2-TS agent has the same augmented action space as the PG-rewards baseline, except that it learns over a longer time horizon than the reward recipient.
Each ``epoch'' for the 2-TS agent spans multiple regular episodes of the recipient, during which the 2-TS agent executes a fixed policy.
The 2-TS agent only caries out a learning update using a final terminal reward, which is the average extrinsic rewards it gets during \textit{test} episodes that are conducted at the end of the epoch.
Performance on test episodes serve as a measure of whether correct reward-giving actions were taken to influence the recipient's learning during the epoch.
To our knowledge, 2-TS is a novel baseline but has key limitations:
the use of two timescales only applies to the asymmetric 2-player game, and requires fast learning by the reward-recipient, chosen to be a tabular Q-learning, to avoid intractably long epochs.

\begin{figure*}[t]
\centering
\begin{subfigure}[t]{0.24\linewidth}
    \centering
    \includegraphics[width=1.0\linewidth]{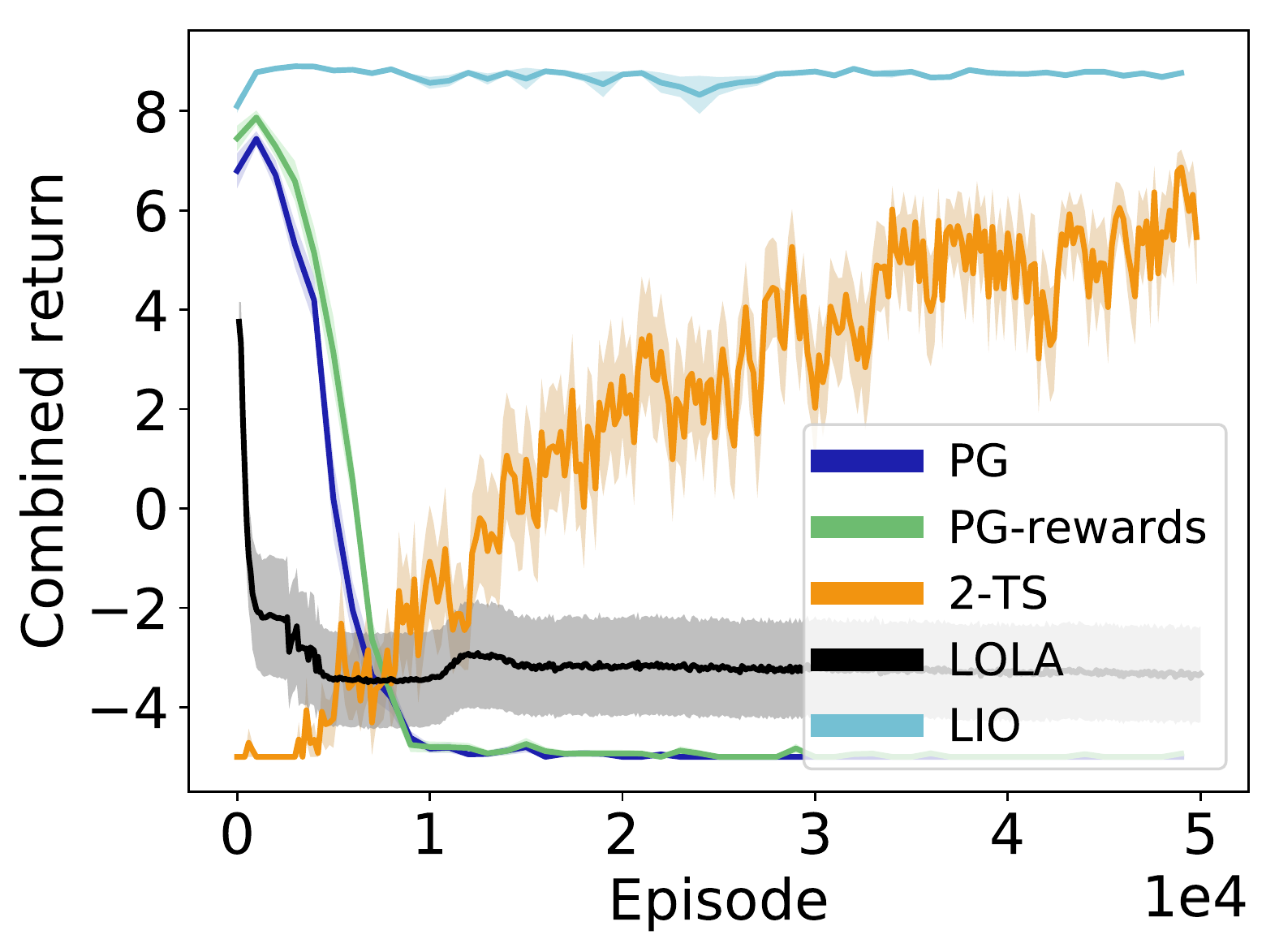}
    \caption{Sum of agent rewards}
    \label{fig:asymmetric-all}
\end{subfigure}
\hfill
\begin{subfigure}[t]{0.24\linewidth}
    \centering
    \includegraphics[width=1.0\linewidth]{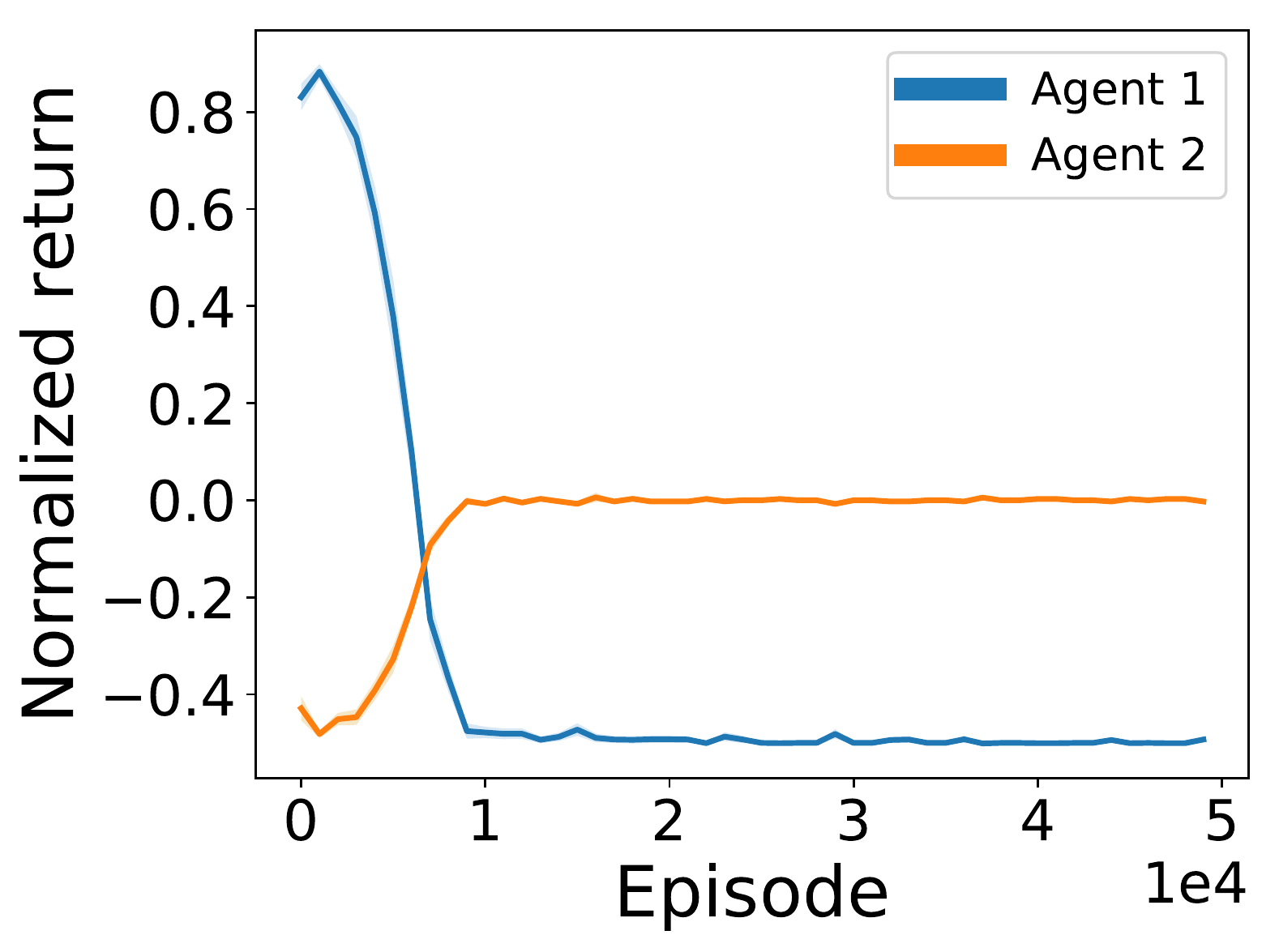}
    \caption{Two PG agents}
    \label{fig:asymmetric-pg}
\end{subfigure}
\hfill
\begin{subfigure}[t]{0.24\linewidth}
    \centering
    \includegraphics[width=1.0\linewidth]{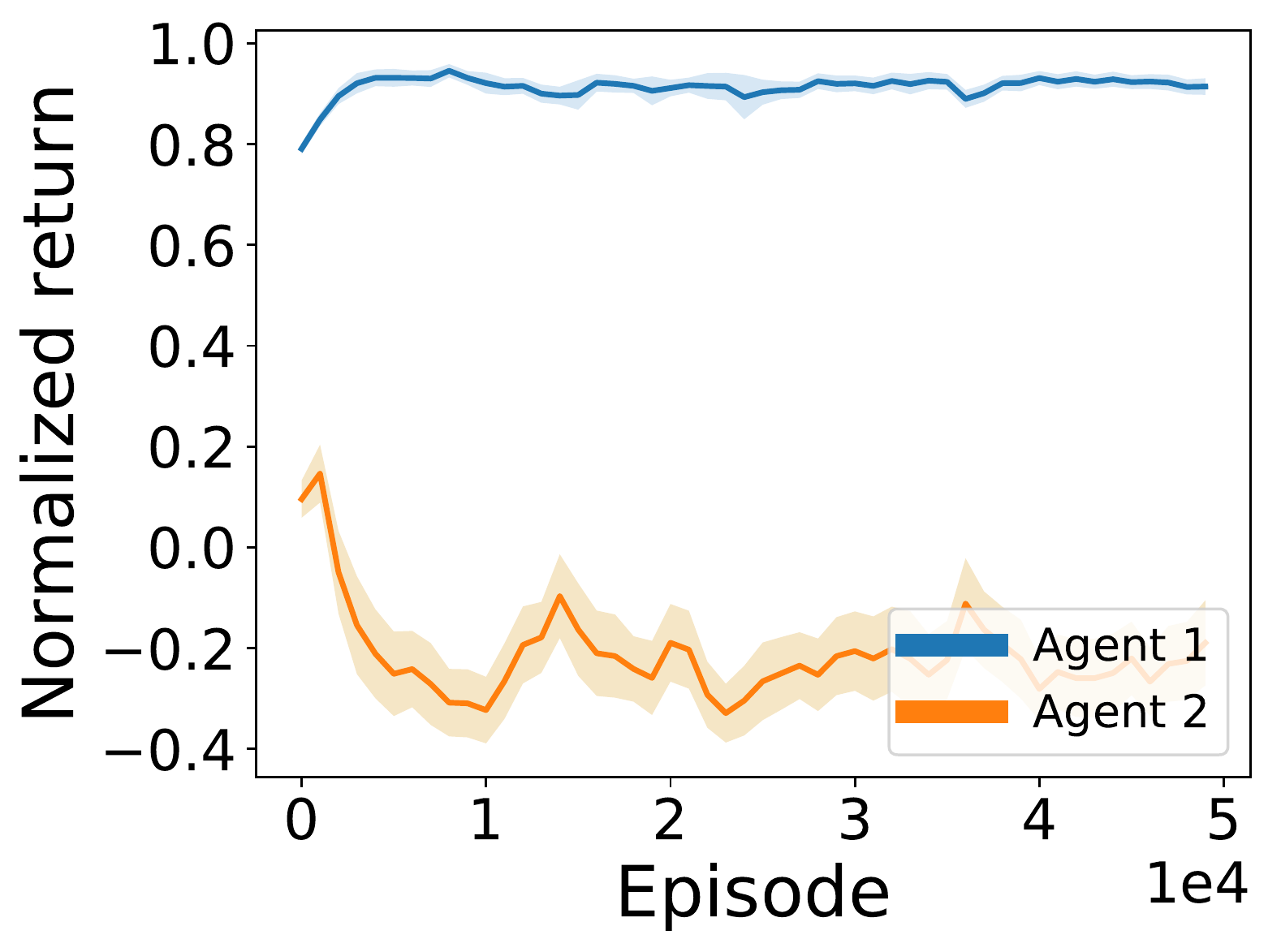}
    \caption{LIO (A1) and PG agent (A2)}
    \label{fig:asymmetric-lio}
\end{subfigure}
\hfill
\begin{subfigure}[t]{0.24\linewidth}
    \centering
    \includegraphics[width=1.0\linewidth]{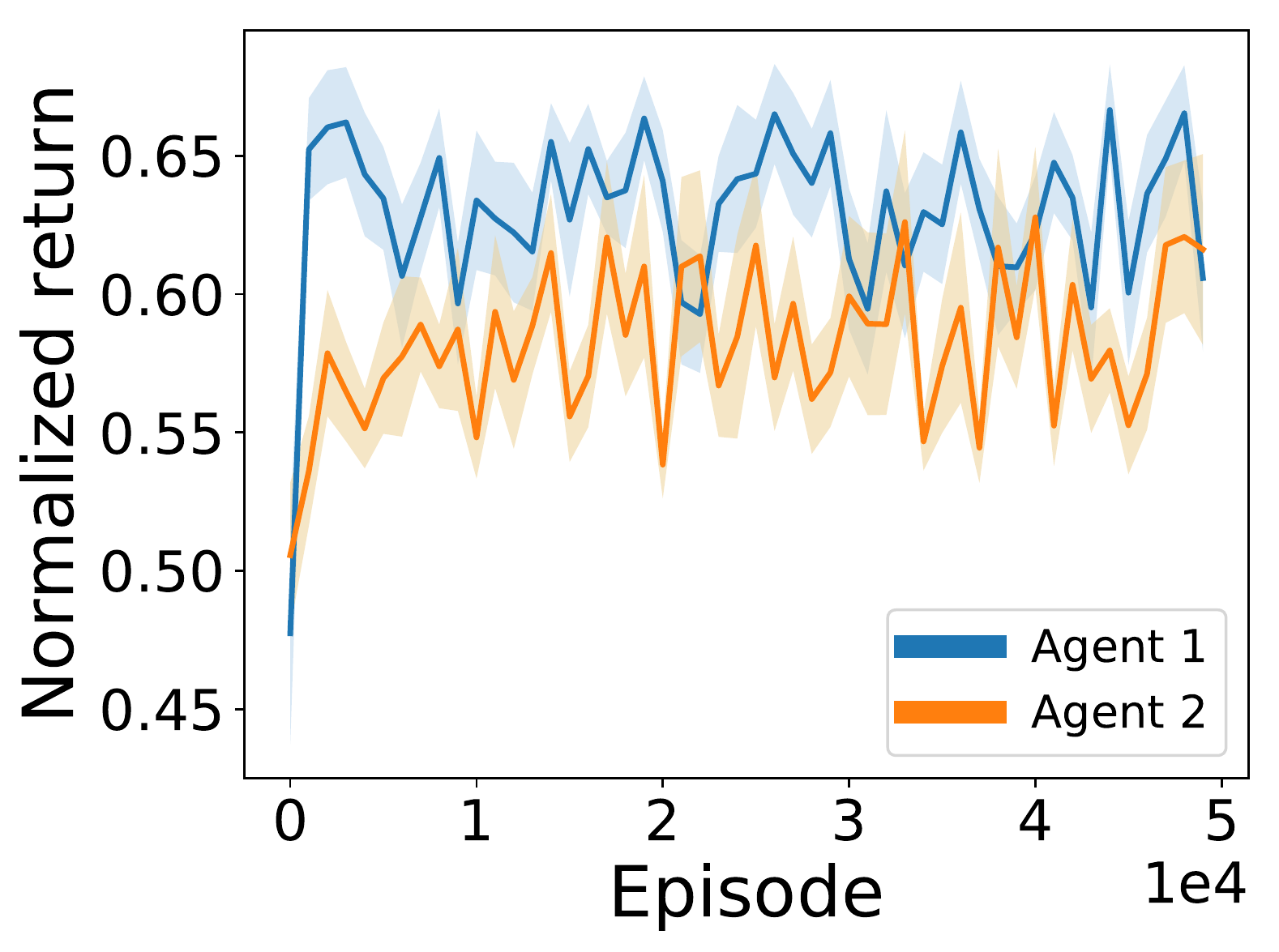}
    \caption{1-episode LIO and PG agent}
    \label{fig:asymmetric-lio-1ep}
\end{subfigure}
\caption{Results in asymmetric 2-player Escape Room. (a) LIO (paired with PG) converges rapidly to the global optimum, 2-TS (paired with tabular Q-learner) converges slower, while policy gradient baselines could not cooperate. (b) Two PG agents cannot cooperate, as A2 converges to ``do-nothing''. (c) A LIO agent (A1) attains near-optimum reward by incentivizing a PG agent (A2). (d) 1-episode LIO has larger variance and lower performance. Normalization factors are 1/10 (A1) and 1/2 (A2).}
\label{fig:asymmetric-2agent}
\end{figure*}

\Cref{fig:asymmetric-2agent} shows the sum of both agents' rewards for all methods on the asymmetric 2-player game, as well as agent-specific performance for policy gradient and LIO, across training episodes.
A LIO reward-giver agent paired with a policy gradient recipient converges rapidly to a combined return near $9.0$ (\Cref{fig:asymmetric-all}), which is the global maximum, while both PG and PG-rewards could not escape the global minimum for A1.
LOLA paired with a PG recipient found the cooperative solution in two out of 20 runs; this suggests the difficulty of using a fixed incentive value to conduct opponent shaping via discrete actions.
The 2-TS method is able to improve combined return but does so much more gradually than LIO, because an epoch consists of many base episodes and it depends on a highly delayed terminal reward.
\Cref{fig:asymmetric-pg} for two PG agents shows that A2 converges to the policy of not moving (reward of 0), which results in A1 incurring penalties at every time step.
In contrast, \Cref{fig:asymmetric-lio} verifies that A1 (LIO) receives the large extrinsic reward (scaled by 1/10) for exiting the door, while A2 (PG) has average normalized reward above -0.5 (scaled by 1/2), indicating that it is receiving incentives from A1.
Average reward of A2 (PG) is below 0 because incentives given by A1 need not exceed 1 continually during training---once A2's policy is biased toward the cooperative action in early episodes, its decaying exploration rate means that it may not revert to staying put even when incentives do not overcome the penalty for moving.
\Cref{fig:asymmetric-lio-1ep} shows results on a one-episode version of LIO where the same episode is used for both policy update and incentive function updates, with importance sampling corrections.
This version performs significantly lower  for A1 and gives more incentives than is necessary to encourage A2 to move.
It demonstrates the benefit of learning the reward function using a separate episode from that in which it is applied.


\subsection{Symmetric Escape Room}
\label{app:symmetric}

\Cref{fig:app-er} shows total reward (extrinsic + received - given incentives), counts of ``lever'' and ``door'' actions, and received incentives in one training run each for ER(2,1) and ER(3,2).
In \Cref{fig:symmetric-2-meas}, A1 becomes the winner and A2 the cooperator.
It is not always necessary for A1 to give rewards.
The fact that LIO models the learning updates of recipients may allow it to find that reward-giving is unnecessary during some episodes when the recipient's policy is sufficiently biased toward cooperation.
In \Cref{fig:symmetric-3-meas}, A3 converges to going to the door, as it incentives A1 and A2 to pull the lever.

\begin{figure*}[h]
\centering
\begin{subfigure}[t]{0.3\linewidth}
    \centering
    \includegraphics[width=1.0\linewidth]{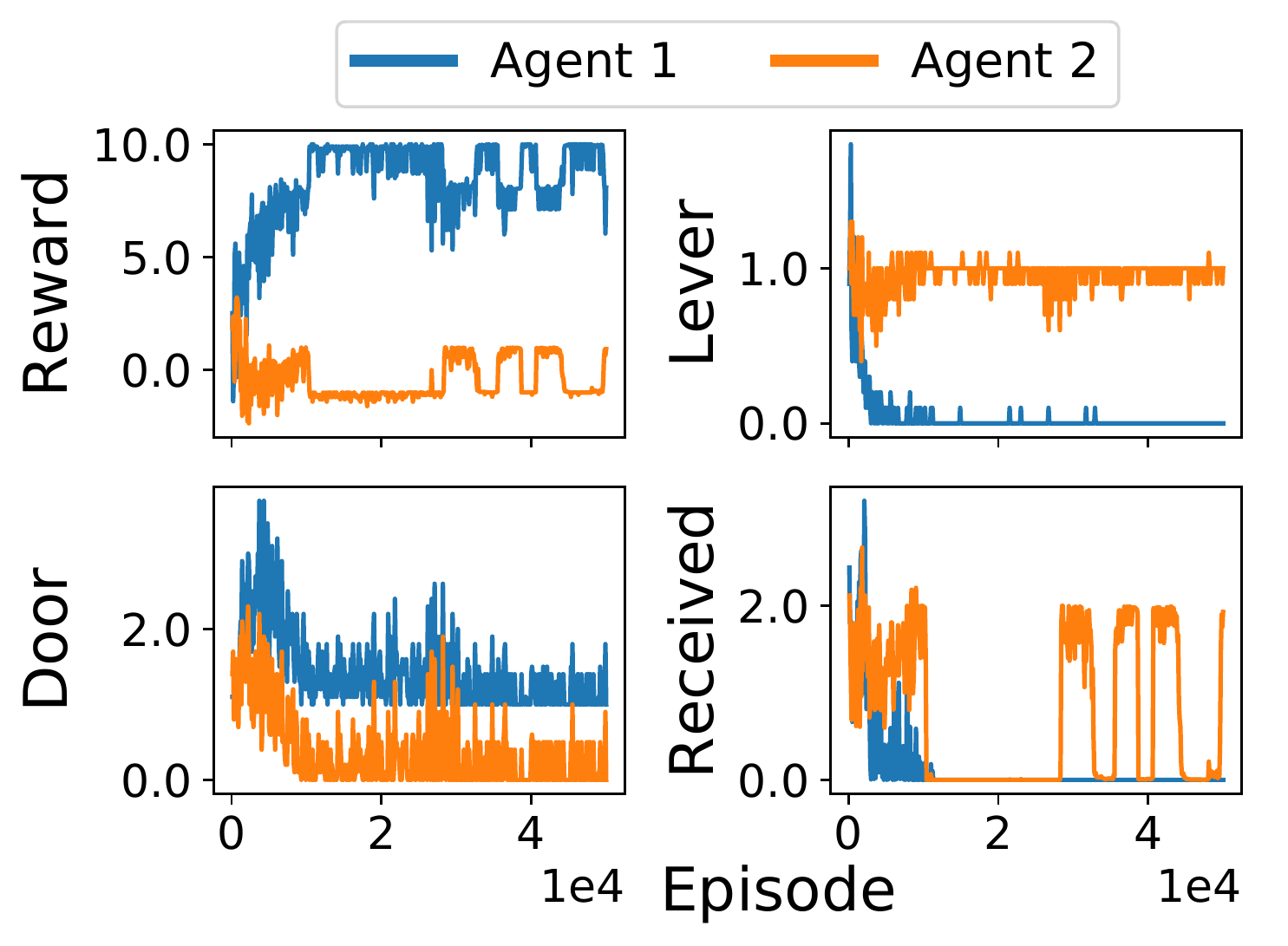}
    \caption{ER(2,1)}
    \label{fig:symmetric-2-meas}
\end{subfigure}
\hfill
\begin{subfigure}[t]{0.3\linewidth}
    \centering
    \includegraphics[width=1.0\linewidth]{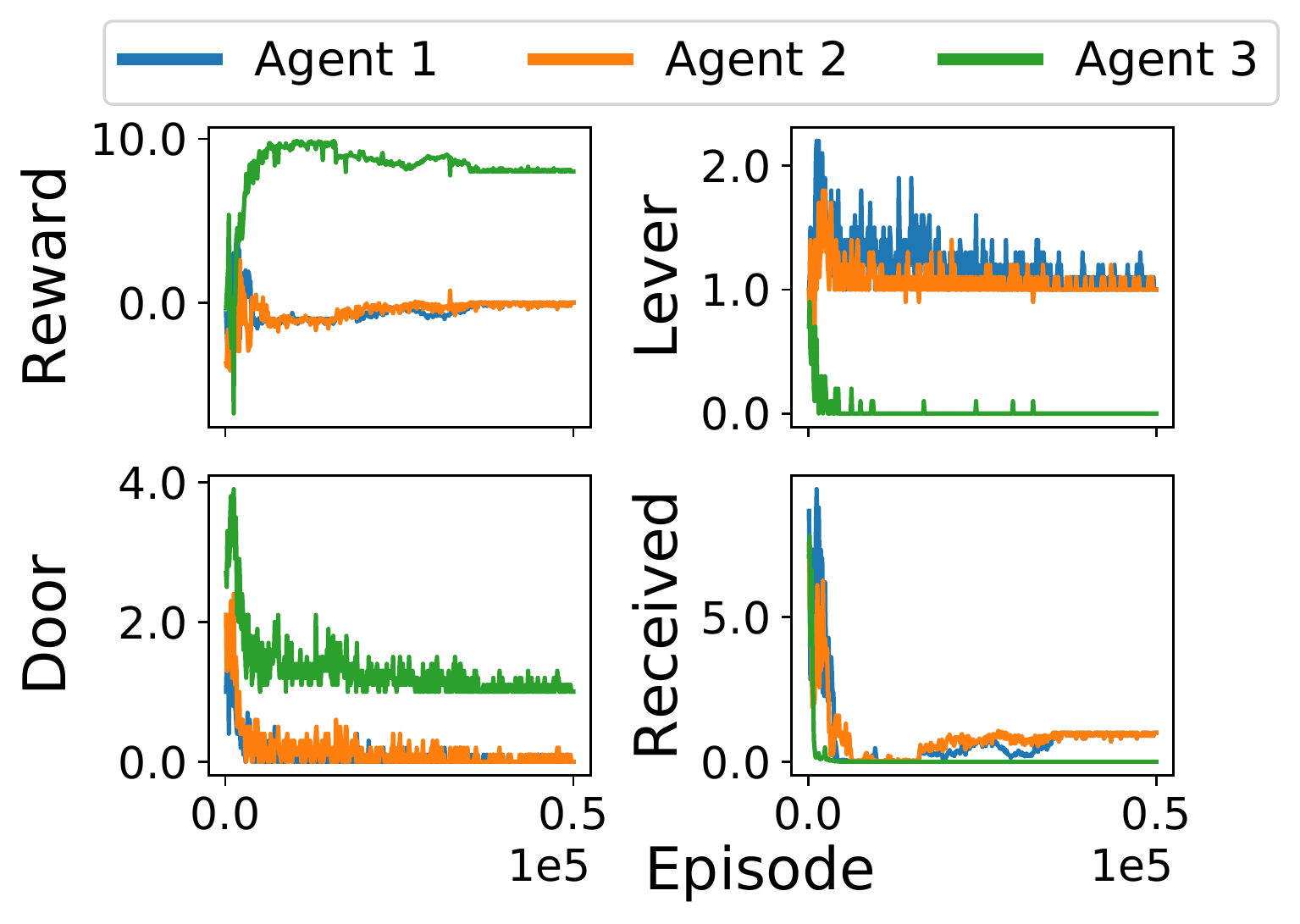}
    \caption{ER(3,2)}
    \label{fig:symmetric-3-meas}
\end{subfigure}
\hfill
\begin{subfigure}[t]{0.3\linewidth}
    \centering
    \includegraphics[width=1.0\linewidth]{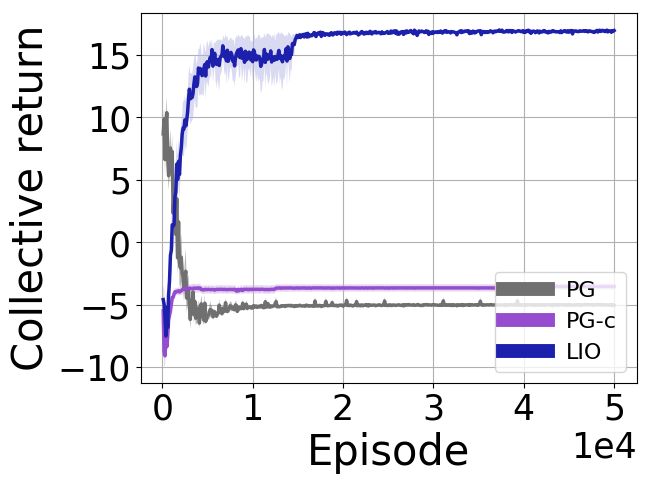}
    \caption{ER(5,3)}
    \label{fig:er_n5}
\end{subfigure}
\caption{(a,b) Individual actions and incentives in ER(2,1) and ER(3,2). (c) LIO converges to the global optimum in ER(5,3).}
\label{fig:app-er}
\end{figure*}

\subsection{Cleanup}

\begin{figure*}[t]
\centering
\begin{subfigure}[t]{0.2\linewidth}
    \centering
    \includegraphics[width=1.0\linewidth]{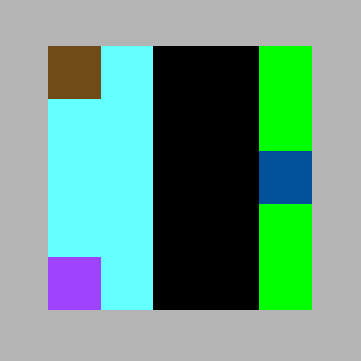}
    \caption{Division of labor}
    \label{fig:cleanup-small-lio-behavior}
\end{subfigure}
\hfill
\begin{subfigure}[t]{0.2\linewidth}
    \centering
    \includegraphics[width=1.0\linewidth]{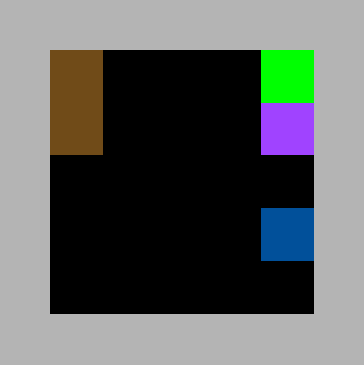}
    \caption{AC agents compete}
    \label{fig:cleanup-small-ac-behavior}
\end{subfigure}
\hfill
\begin{subfigure}[t]{0.28\linewidth}
    \centering
    \includegraphics[width=1.0\linewidth]{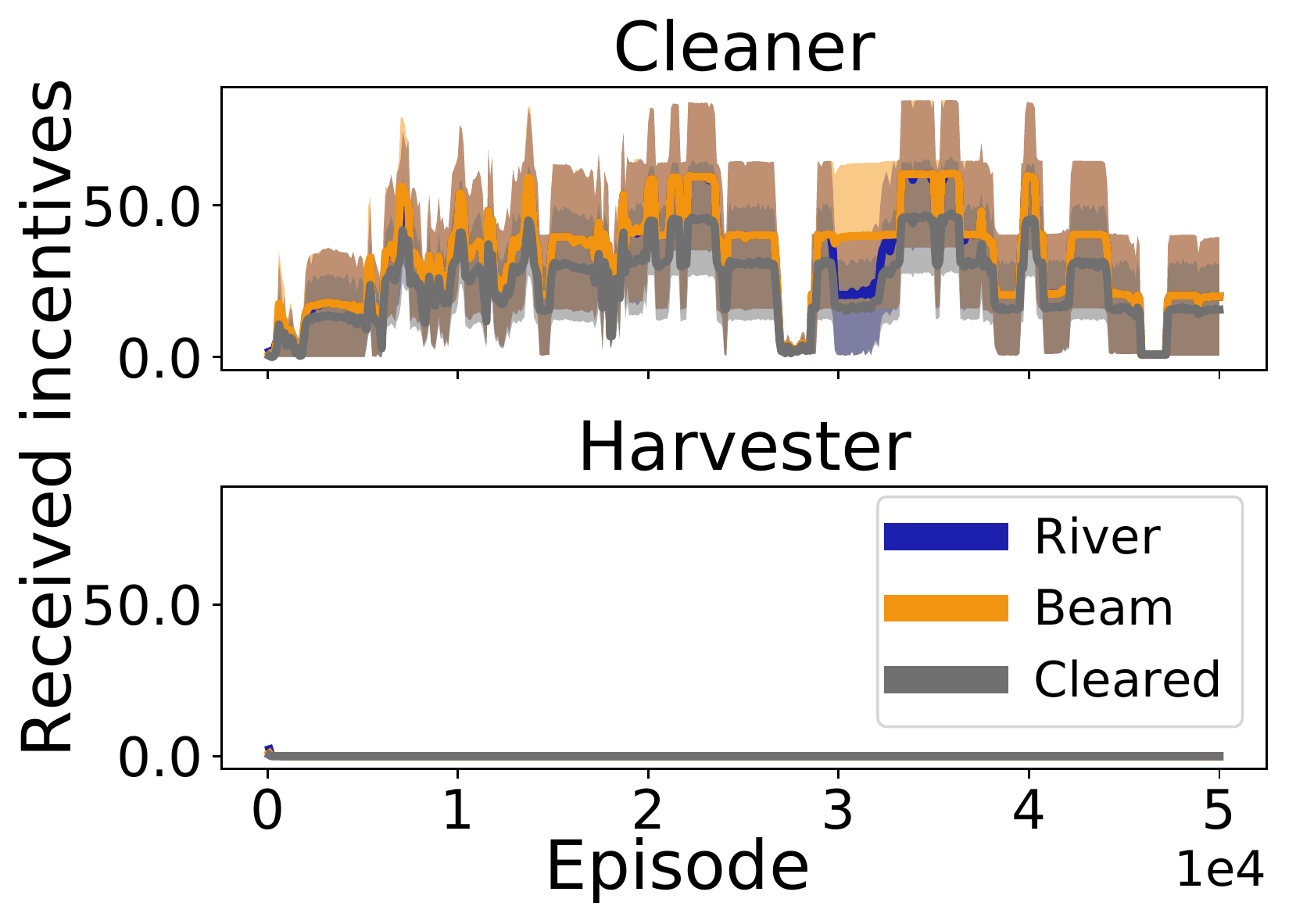}
    \caption{Cleaner's incentives}
    \label{fig:cleanup-small-received}
\end{subfigure}
\hfill
\begin{subfigure}[t]{0.28\linewidth}
    \centering
    \includegraphics[width=1.0\linewidth]{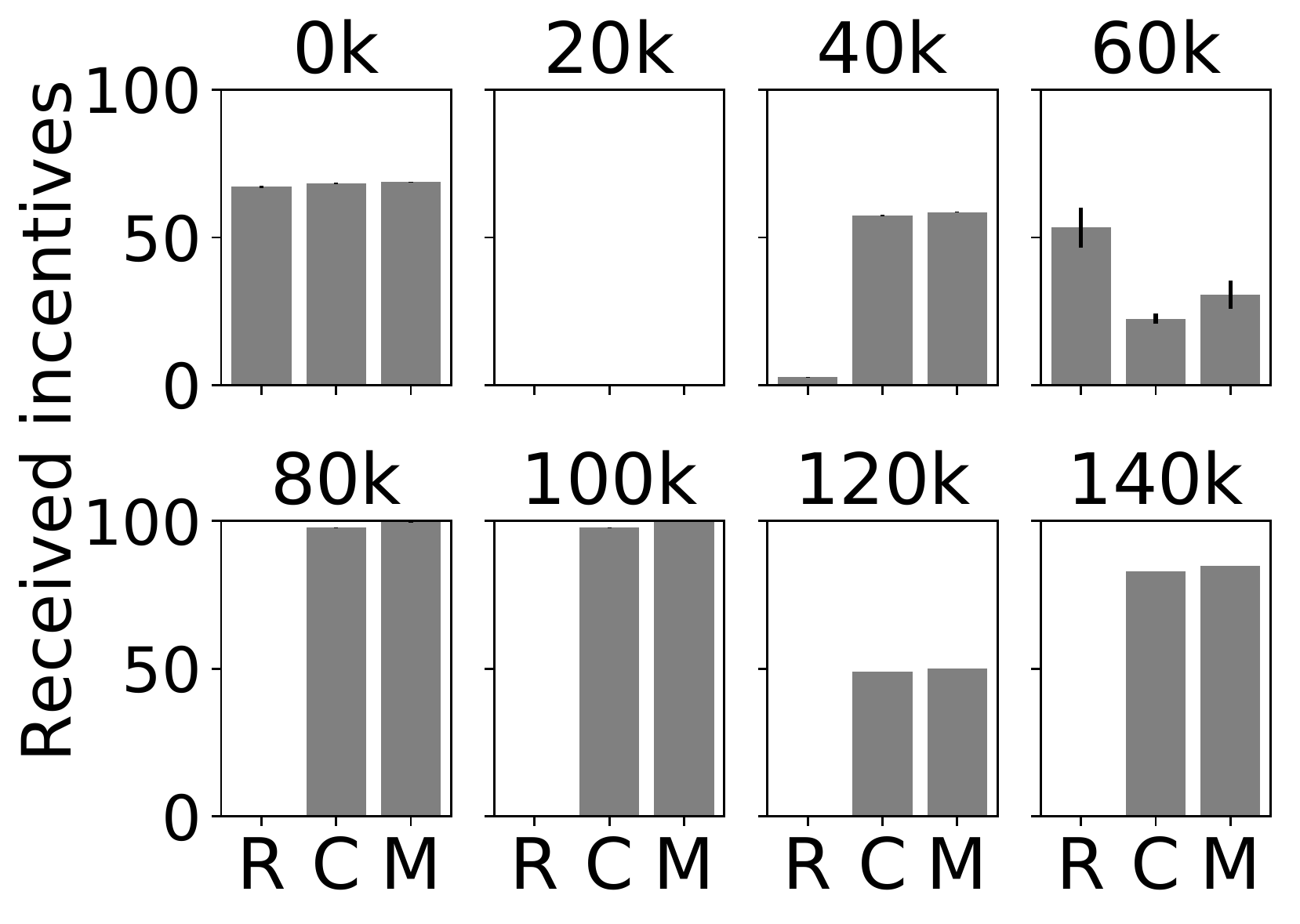}
    \caption{10x10 map}
    \label{fig:cleanup-10x10-measure}
\end{subfigure}
\caption{(a) In 7x7 Cleanup, one LIO agent learns to focus on cleaning waste, as it receives incentives from the other who only collects apple. 
(b) In contrast, AC agents compete for apples after cleaning. 
(c) Incentives received during training on 7x7 Cleanup. 
(d) Behavior of incentive function against scripted opponent policies on 10x10 map.}
\label{fig:app-cleanup}
\end{figure*}

\Cref{fig:cleanup-small-lio-behavior} is a snapshot of the division of labor found by two LIO agents, whereby the blue agent picks apples while the purple agent stays on the river side to clean waste.
The latter does so because of incentives from the former.
In contrast, \Cref{fig:cleanup-small-ac-behavior} shows a time step where two AC agents compete for apples, which is jointly suboptimal.
\Cref{fig:cleanup-small-received} shows the received incentives during training in the 7x7 map, for each of two LIO agents that were classified after training as a ``Cleaner'' or ``Harvester''.
\Cref{fig:cleanup-10x10-measure} shows the incentives given by a ``Harvester'' agent to three scripted agents during each training checkpoint.

Agents with hand-designed intrinsic rewards based on social influence \citep{jaques2019social} also outperform standard RL agents on Cleanup.
We can make an indirect comparison to \citep{jaques2019social} by noting that IA reaches a score around 250 by $1.6\times10^8$ steps \citep[Figure 3a]{hughes2018inequity}, which outperforms the score of 200 attained by Social Influence at $3\times10^8$ steps \citep[Figure 1a]{jaques2019social} in the original Cleanup map with 5 agents.
Hence, the fact that LIO outperforms IA in our experiments suggests that LIO compares favorably with Social Influence, provided that LIO uses the same RL algorithm as the latter for policy optimization.


\end{document}